\pdfoutput=1

\documentclass[11pt]{article}


\usepackage{amsmath,amsfonts,bm}




\def\Figref#1{Figure~\ref{#1}}





\def\eqref#1{equation~\ref{#1}}









\def\1{\bm{1}}
\newcommand{\train}{\mathcal{D}}





\def\ervc{{\textnormal{c}}}

\def\ervg{{\textnormal{g}}}

\def\ervy{{\textnormal{y}}}
\def\ervz{{\textnormal{z}}}



\def\ermC{{\textnormal{C}}}

\def\ermG{{\textnormal{G}}}


\def\vx{{\bm{x}}}


\def\evv{{v}}
\def\evw{{w}}


\def\mM{{\bm{M}}}

\DeclareMathAlphabet{\mathsfit}{\encodingdefault}{\sfdefault}{m}{sl}
\SetMathAlphabet{\mathsfit}{bold}{\encodingdefault}{\sfdefault}{bx}{n}


\def\gM{{\mathcal{M}}}



\def\sR{{\mathbb{R}}}


\def\emM{{M}}









\usepackage[]{acl}

\usepackage{times}
\usepackage{latexsym}

\usepackage[T1]{fontenc}

\usepackage[utf8]{inputenc}

\usepackage{microtype}

\usepackage{algorithm}
\usepackage{algorithmic}
\usepackage{siunitx}    
\usepackage{relsize}
\usepackage{etoolbox}
\robustify\smaller

%
\usepackage{newfloat}
\usepackage{listings}
\lstset{%
	basicstyle={\footnotesize\ttfamily},
	numbers=left,numberstyle=\footnotesize,xleftmargin=2em,
	aboveskip=0pt,belowskip=0pt,%
	showstringspaces=false,tabsize=2,breaklines=true}
\floatstyle{ruled}
\newfloat{listing}{tb}{lst}{}
\floatname{listing}{Listing}

\usepackage{soul}
\usepackage{amsfonts}
\usepackage{multirow}


\usepackage{dsfont}
\usepackage{commath}

\usepackage{amsmath}
\usepackage{mathtools}

\usepackage{cleveref}

\usepackage{bbold}

\usepackage[utf8]{inputenc}
\usepackage{pgfplots}
\DeclareUnicodeCharacter{2212}{−}
\usepgfplotslibrary{groupplots,dateplot}
\usetikzlibrary{patterns,shapes.arrows}
\pgfplotsset{compat=newest}

\usepackage{placeins}
\usepackage{subcaption}
\usepackage{mwe}
\usepackage{pifont}
\usepackage{xcolor}
\usepackage{paralist}

\usepackage{booktabs}
\usepackage{relsize}
\usepackage{array}

\newcolumntype{V}{>{\smaller}l}

\newcommand{\class}[1]{\textsf{#1}\xspace}

\newcommand{\dataset}[1]{\textsc{#1}\xspace}

\newcommand{\Bios}{\dataset{Bios}}

\newcommand{\method}[1]{\textsc{#1}\xspace}
\newcommand{\Standard}{\method{Standard}}
\newcommand{\INLP}{\method{INLP}}

\newcommand{\Adv}{\method{Adv}}
\newcommand{\DAdv}{\method{DAdv}}

\newcommand{\AAdv}{\method{A-Adv}}

\newcommand{\tpr}{\ensuremath{\text{TPR}}}

\newcommand{\DTO}{\ensuremath{\text{DTO}}\xspace}

\usepackage{bm}

\usepackage{xspace}
\usepackage{adjustbox}

\usepackage{amsthm}
\newtheorem{theorem}{Theorem}[section]
\newtheorem{lemma}[theorem]{Lemma}

\definecolor{applegreen}{rgb}{0.55, 0.71, 0.0}

\newcommand{\newChanges}[1]{#1}

\definecolor{myBlue}{rgb}{0.12156863, 0.46666667, 0.70588235}
\definecolor{myOrange}{rgb}{1., 0.49803922, 0.05490196}
\definecolor{myRed}{rgb}{1,0,0}

\usepackage{float}
\usepackage{mathtools, caption}

\usepackage{xurl}

\usepackage{circuitikz}
\usetikzlibrary{external}
\tikzexternalize[prefix=tikz/,optimize command away=\includepdf]

%
%

\title{Fair Enough: Standardizing Evaluation and Model Selection for Fairness Research in NLP}



\author{
Xudong Han$^{1,2}$ \qquad  Timothy Baldwin$^{1,2}$ \qquad    Trevor Cohn$^{1}$\\ 
$^{1}$The University of Melbourne \\ 
$^{2}$MBZUAI \\ 
\url{xudongh1@student.unimelb.edu.au}, \url{{tbaldwin,t.cohn}@unimelb.edu.au}
}

\begin{document}
\maketitle
\begin{abstract}

Modern NLP systems exhibit a range of biases, which a growing literature on model debiasing attempts to correct.
However current progress is hampered by a plurality of definitions of bias, means of quantification, and oftentimes vague relation between debiasing algorithms and theoretical measures of bias.
This paper seeks to clarify the current situation and plot a course for meaningful progress in fair learning, with two key contributions:
(1) making clear inter-relations among the current gamut of methods, and their relation to fairness theory; and
(2) addressing the practical problem of model selection, which involves
a trade-off between fairness and accuracy and has led to systemic issues
in fairness research.
Putting them together, we make several recommendations to help shape future work.%
\footnote{Code available at \url{https://github.com/HanXudong/Fair_Enough}}

\end{abstract}

\section{Introduction}

In NLP and machine learning, there has been a surge of interest in fairness\newChanges{~due to the fact that models often learn and amplify biases in the training dataset, leading to a range of harms~\citep{badjatiya2019stereotypical, diaz2018addressing}}.
A central notion is group-wise fairness~\citep{dwork2012fairness, chouldechova2017fair, berk2021fairness}, which is typically measured as the model performance disparities across groups of data that are created by the combinations of protected attributes, such as race and gender.
A broad range of bias evaluation metrics have been introduced in previous studies to capture different types of biases -- such as \emph{demographic parity}~\citep{feldman2015certifying} and \emph{equal opportunity} (EO)~\citep{hardt2016equality} -- and different approaches have been adopted to both measure group disparities within each class, and aggregate over those disparities. Each of these choices implicitly encodes assumptions about the nature of fairness, but little work has been done to spell out what those assumptions are, or guide the selection of evaluation metric from first principles of what constitutes fairness.

\begin{figure}[!t]
    \centering
    \includegraphics[width=\linewidth]{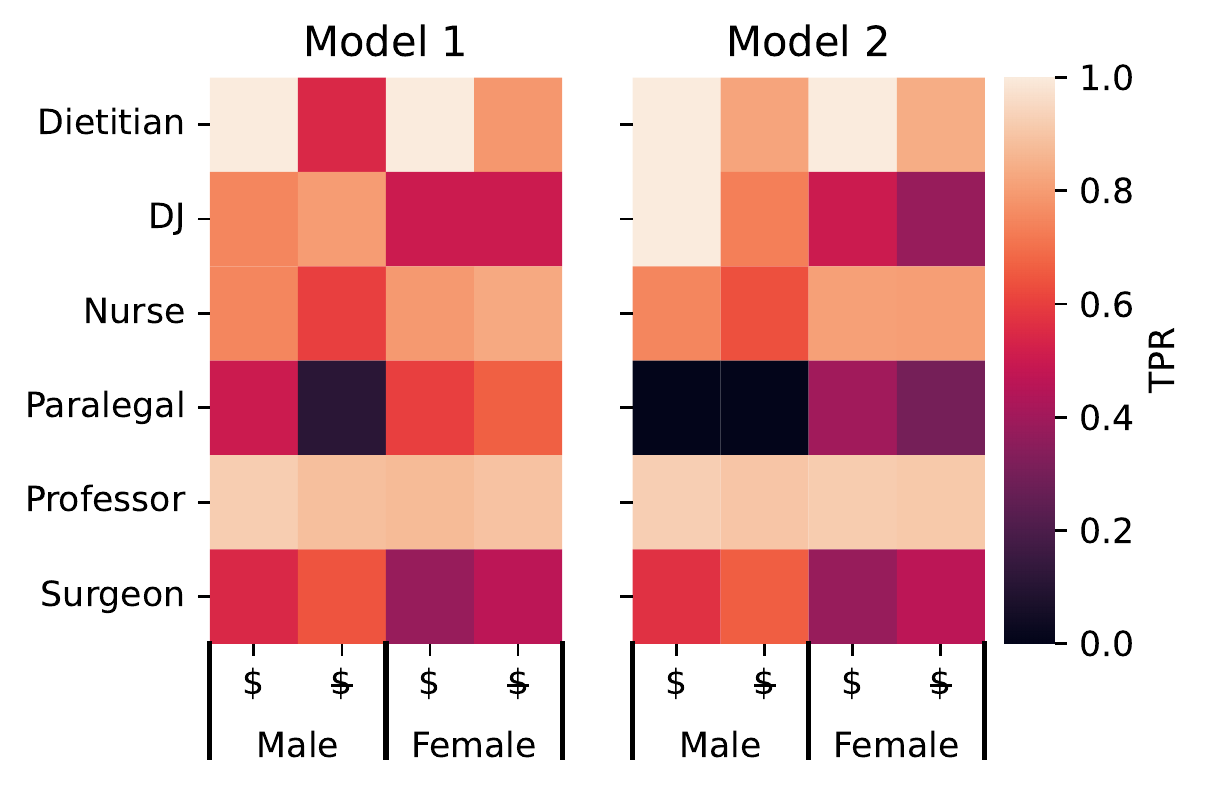}
    \caption{
    True positive rate (TPR) evaluation results over a biography classification dataset broken down by author demographic and selected profession classes. 
    \$ and \st{\$} denote the economic status (wealthy vs.~not, respectively).
    The pattern of results exhibits various biases, however it is difficult to distil this into a single figure of merit, and thus determine which is the better or fairer of the two models.
    }
    \label{fig:tpr_row_metrics}
\end{figure}


As an illustration of this issue, Figure~\ref{fig:tpr_row_metrics} depicts the true positive rate (TPR) values for two  models.\footnote{For further details see Appendix~\ref{sec:introduction_full_results}.}
Given that the EO fairness is satisfied if different groups achieve identical TPR, which model is fairer or ``better'' out of the two?
The answer is far from clear, and in terms of evaluation practice, dictated by a series of choices which implicitly encode different assumptions about what fairness is.



In terms of research practice, these choices have led to a lack of consistency and direct empirical comparability between methods.
Equally concerningly, given that fairness research involves an inherent trade-off between raw model performance and fairness, it has more subtly led to a lack of rigour in terms of how model selection has been carried out, meaning that methods are often deployed in suboptimal ways relative to a particular evaluation methodology.

In this paper, we seek to address these problems. We start by surveying current practices for fairness evaluation aggregation within an integrated framework, and discuss considerations and motivations for using different aggregation approaches. 
To ensure fairness metrics are fully comparable, we present a checklist for reporting fairness evaluation metrics, and also recommendations for aggregation method selections.
We next survey model comparison methods, and demonstrate the issues stemming from using inconsistent model selection criteria. 
To ensure fair comparisons, we further introduce a metric for comparison without model selection, which measures the area under the trade-off curve of each method.

Overall this paper makes two key contributions: (1) we characterise current practices for fairness evaluation and their grounding in theory, proposing a best-practice checklist; and (2) we propose a new method which resolves several issues relating to model selection and comparison.

\begin{table*}
    \centering
    \small
    \begin{tabular}{lllcl}
        \toprule
        {\bf Aggregation} & {\bf Method} & {\bf Description} & {\bf Unit} & {\bf Reference} \\
        \toprule
        \multirow{9}{*}{Group-wise}   & mean gap & $\beta_{\ervc} = \frac{1}{\ermG}\sum_{\ervg}|\emM_{\ervc, \ervg}-\overline{\emM}_{\ervc}|$ & G & \citep{shen2022optimising} \\
        & variance & $\beta_{\ervc} =\frac{1}{G-1}\sum_{\ervg}|\emM_{\ervc, \ervg}-\overline{\emM}_{\ervc}|^{2}$ & G & \citep{lum2022biasing} \\
        & max gap & $\beta_{\ervc} = \text{max}_{\ervg}|\emM_{\ervc, \ervg}-\overline{\emM}_{\ervc}|$ & G & \citep{yang2020fairness} \\
        & min score & $\beta_{\ervc} = \text{min}_{\ervg}\emM_{\ervc, \ervg}$ & S & \citep{NEURIPS2020_07fc15c9} \\
        & min ratio & $\beta_{\ervc} = \text{min}_{\ervg}\frac{\emM_{\ervc, \ervg}}{\overline{\emM}_{\ervc}}$ & R & \citep{zafar2017fairness} \\
        & max difference & $\beta_{\ervc} = \text{max}_{\ervg}\emM_{\ervc, \ervg}-\text{min}_{\ervg}\emM_{\ervc, \ervg}$ & S & \citep{bird2020fairlearn} \\
        & max ratio & $\beta_{\ervc} = \frac{\text{max}_{\ervg}\emM_{\ervc, \ervg}}{\text{min}_{\ervg}\emM_{\ervc, \ervg}}$ & R &  \citep{feldman2015certifying} \\
        & difference threshold ($\gamma$) & $\beta_{\ervc} =\frac{1}{\ermG}\sum_{\ervg}\mathbb{1}_{[0,\gamma]}(|\emM_{\ervc, \ervg}-\overline{\emM}_{\ervc}|)$ & G & \citep{Kearns2019AnES} \\
        & ratio threshold ($\gamma$) & $\beta_{\ervc} = \frac{1}{\ermG}\sum_{\ervg}\mathbb{1}_{[0,\gamma]}(|\frac{\emM_{\ervc, \ervg}}{\overline{\emM}_{\ervc}}-1|)$ & R & \citep{barocas-hardt-narayanan} \\
        \midrule
        \multirow{3}{*}{Class-wise} & binary & $\delta = \sum_{\ervc}\beta_{\ervc}\mathbb{1}_{\{1\}}(c)$ & $\beta$ & \citep{Roh2021FairBatch} \\
        & quadratic mean & $\delta = \sqrt{\frac{1}{\ermC}\sum_{\ervc}\beta_{\ervc}^{2}}$ & $\beta$ & \citep{romanov2019s} \\
        & mean & $\delta = \frac{1}{\ermC}\sum_{\ervc}\beta_{\ervc}$ & $\beta$ & \citep{li-etal-2018-towards} \\
         \bottomrule
    \end{tabular}
    \caption{Summary of different aggregation approaches. Based on the basic unit, group-wise aggregations are additionally categorized into three types: \textbf{S}core ($\emM_{\ervc, \ervg}$), \textbf{G}ap ($|\emM_{\ervc, \ervg}-\overline{\emM}_{\ervc}|$), and \textbf{R}atio ($\frac{\emM_{\ervc, \ervg}}{\overline{\emM}_{\ervc}}$).}
    \label{tab:literature}
\end{table*}

\section{Related Work}
 In terms of bias metrics, there are mainly two lines of work in the literature on NLP fairness: bias in the geometry of text representations (intrinsic bias), and performance disparities across groups in downstream tasks (extrinsic bias), respectively. Based on the hypothesis that measuring and mitigating intrinsic bias will also reduce extrinsic bias, previous work has mainly focused on measuring and mitigating intrinsic bias, such as the Word Embedding Association Test (WEAT)~\citep{caliskan2017semantics}, Sentence Encoder Association Test (SEAT)~\citep{may-etal-2019-measuring}, and Embedding Coherence Test (ECT)~\citep{dev2019attenuating}. However, \citet{goldfarb-tarrant-etal-2021-intrinsic} recently showed that there is no reliable correlation between intrinsic and extrinsic biases, and suggest future work focusing on extrinsic bias measurement (which is the focus of this work). 

As for bias mitigation, debiasing methods for intrinsic and extrinsic bias generally suffer from performance--fairness trade-offs controlled by particular hyperparameters such as the number of principal components used to define the intrinsic bias subspace~\citep{bolukbasi2016man}, and the strength of addition objectives for performance parity across groups~\citep{shen2022optimising}. In measuring performance (perplexity and LM score for sentence embeddings, for example) and fairness simultaneously, the model comparison framework presented in this paper is generalizable for both intrinsic and extrinsic fairness.

\section{Fairness Metrics}
\label{sec:fairness_metric}

In this section, we discuss the considerations involved in fairness evaluation. 
We start with a survey of different methods for aggregating scores, and propose a two-step aggregation framework for fairness evaluation.

\subsection{Formal Notation Preliminaries}
We consider fairness evaluation in a classification scenario. Evaluation is based on a test dataset consisting of $n$ instances $\train=\{(\vx_{i}, \ervy_{i}, \ervz_{i})\}_{i=1}^{n}$, where $\vx_{i}$ is an input vector, $\ervy_{i}\in \{ \ervc \}_{ \ervc = 1 }^{\ermC}$ represents target class label, and $\ervz_{i} \in \{ \ervg \}_{ \ervg = 1 }^{\ermG}$ is the group label, such as gender.\footnote{When considering multiple protected attributes, $\ervz$ can be intersectional identities as shown in~\Cref{fig:tpr_row_metrics}.}

Given a model that has been trained to make predictions w.r.t.~the target label $\hat{\ervy}=f(\vx)$, fairness evaluation metrics generally measure group-wise performance disparities for a particular metric $m(\ervy,\hat{\ervy})$.
For example, positive predictive rate and true positive rate have been employed as the metric for demographic parity~\citep{feldman2015certifying} and equal opportunity~\citep{hardt2016equality}, respectively.

For each group, the results of a metric $m$ are $\ermC$-dimensional vectors, one dimension for each class.
Given $\ermG$ protected groups, the full results are organized as a $\ermC \times \ermG$ matrix, denoted as $\mM$.
For the subset of instances $\train_{\ervc, \ervg}=\{(\vx_{i}, \ervy_{i}, \ervz_{i})|\ervy_{i}=\ervc, \ervz_{i}=\ervg\}_{i=1}^{n}$, we denote the corresponding evaluation results as $\emM_{\ervc, \ervg}$. 
Taking \Figref{fig:tpr_row_metrics} as an example, $\mM$ refers to the heatmap plot, and $\emM_{\ervc, \ervg}$ is the cell in  the $\ervc$-th row and $\ervg$-th column.

Given $\mM$, the question is how exactly to aggregate the result matrix as a single number that measures the degree of fairness. 
We split the aggregation into two steps: (1) group-wise aggregation, which aggregates evaluation results of all groups within a class ($[\emM_{\ervc, 1},\dots,\emM_{\ervc, \ermG}]$) into a single number ($\beta_{\ervc}$); and (2) class-wise aggregation, which aggregates $[\beta_{1},\dots,\beta_{\ermC}]$ scores of all classes into a single number $\delta$.\footnote{
Mathematically, it would be possible to do the class-wise aggregation first, and then the group-wise aggregation. However, aggregating class-wise performances within a particular group essentially measures the long-tail learning problem rather than fairness. 
} 

\subsection{Existing Aggregation Approaches}

Table~\ref{tab:literature} summarizes several aggregation approaches from previous work, which are categorized based on the level of aggregation.

\subsubsection{Basic Unit}

The \textbf{basic unit} refers to the inputs to an aggregation function.

\paragraph{\newChanges{Group-wise}}
Broadly, there are three types of basic units for group-wise aggregation:
\begin{enumerate}
    \item the original \textbf{score} ($\emM_{\ervc, \ervg}$), which maintains the actual performance level under aggregation and larger is better;
    \item the \textbf{gap}, i.e., absolute difference, between the evaluation results of a group and the average performance ($|\emM_{\ervc, \ervg}-\overline{\emM}_{\ervc}|$), where smaller is better and the minimum is 0; and
    \item \textbf{ratio} of the evaluation results of a group to the average ($\frac{\emM_{\ervc, \ervg}}{\overline{\emM}_{\ervc}}$), where closer to 1 is better.
\end{enumerate}

\textbf{Score} describes the actual performance of each group, and is generally used to measure extrema of actual performances. 
For example, the \emph{Rawlsian Max Min} criterion~\citep{rawls2001justice} is satisfied if the utility of the worst-performing group is maximized. 
Related fairness notions are also known as per-group fairness~\citep{hashimoto2018fairness, NEURIPS2020_07fc15c9}.

The other two units, \textbf{gap} and \textbf{ratio}, support the notion of group fairness, and evaluate whether or not $\hat{\ervy}$ is fair w.r.t.~$\ervz$.
Taking EO~\citep{hardt2016equality} as an example, it requires the true positive rate to be independent of $\ervz$.
Formally, for a particular class $\ervc$, the EO criterion is satisfied iff
$$\tpr_{\ervc, \ervg} = \overline{\tpr}_{\ervc}, \forall \ervg \in \{\ervg\}_{\ervg=1}^{\ermG}.$$
As such, it is straightforward to directly measure the absolute difference between $\tpr_{\ervc, \ervg}$ and $\overline{\tpr}_{\ervc}$, 
$$\tpr_{\ervc, \ervg} = \overline{\tpr}_{\ervc} \Leftrightarrow |\tpr_{\ervc, \ervg} - \overline{\tpr}_{\ervc}| = 0,$$
which is essentially the \textbf{gap} unit.

Alternatively, the \textbf{ratio} unit can be used to measure inequality as a percentage:
$$\tpr_{\ervc, \ervg} = \overline{\tpr}_{\ervc} \Leftrightarrow \frac{\tpr_{\ervc, \ervg}}{\overline{\tpr}_{\ervc}} = 1.$$
\textbf{Ratio}-based scores can also be interpreted via a ``$q\%$-rule''~\citep{zafar2017fairness,barocas-hardt-narayanan}, for example, the $80\%$-rule for \emph{disparate impact}~\citep{feldman2015certifying}, which requires that the ratio is no less than 0.8.


The $q\%$-rule can be captured more explicitly by a threshold~\citep{kearns2018preventing,barocas-hardt-narayanan}, which is a relaxation of the equality based on a slack threshold $\epsilon \in \sR^{+}$, 
$|1-\frac{\tpr_{\ervc, \ervg}}{\overline{\tpr}_{\ervc}}| \leq \epsilon$.
Similarly, the threshold can be applied to \textbf{gap}, resulting in $|\tpr{\ervc, \ervg} - \overline{\tpr}_{\ervc}| \leq \epsilon$.


\paragraph{Class-wise}

The next step is class-wise aggregation, taking the group-wise aggregation for each class from above as inputs, $[\beta_{1},\dots,\beta_{\ermC}]$.

\subsection{Generalized Mean Aggregation}
Before discussing each of these aggregation methods, we first introduce the basic concept of the \emph{generalized mean} as a framework for describing aggregation functions, and then make the link between the generalized mean and existing aggregation methods.

Formally, the generalized mean is defined as:
$$\gM_{p}(\evv_{1},\dots,\evv_{n}) = \left(\frac{1}{n}\sum_{i=1}^{n}\evv_{i}^p\right)^{\frac{1}{p}},$$
where $\evv_{i} \in \sR^{+}$ are positive real numbers to be aggregated, and $p$ is the exponent parameter.
A desired property of the generalized mean is its inequality, which states that, 
$$\gM_{p}(\evv_{1},\dots,\evv_{n}) > \gM_{p'}(\evv_{1},\dots,\evv_{n}), \forall p > p'.$$
Essentially, a larger value of $p$ encourage the aggregation to focus more on the larger-valued elements, which can be illustrated with the specific cases shown in Table~\ref{tab:generalized_mean_specific_cases}.

\begin{table}[t]
    \centering
    \small
    \begin{tabular}{cl}
        \toprule
         \bf Power ($p$) & \bf Formulation \\
         \midrule
         $-\infty$ & Minimum: $\min\{\evv_{1}, \dots, \evv_{n}\}$ \\
         $-1$ & Harmonic Mean: $\frac{n}{\sum_{i=1}^{n}\evv_{i}^{-1}}$ \\
         1 & Arithmetic Mean: $\frac{1}{n}\sum_{i=1}^{n}\evv_{i}$ \\
         2 & Quadratic Mean: $\sqrt{\frac{1}{n}\sum_{i=1}^{n}\evv_{i}^{2}}$ \\
         $+\infty$ & Maximum: $\max\{\evv_{1}, \dots, \evv_{n}\}$ \\
         \bottomrule
    \end{tabular}
    \caption{Commonly-used cases of generalized mean aggregation.}
    \label{tab:generalized_mean_specific_cases}
\end{table}

By setting $p = \pm\infty$, generalized mean returns extremum values, including (in Table~\ref{tab:literature}): (1) the maximum value of gap~\citep{yang2020fairness}, difference~\citep{bird2020fairlearn}, and ratio~\citep{feldman2015certifying}; and (2) the minimum value of score~\citep{NEURIPS2020_07fc15c9} and ratio~\citep{zafar2017fairness}.

For other $p$ values, the generalized mean reflects the relative dispersion of its inputs. 
For example, group-wise mean gap aggregation~\citep{shen2022optimising} and class-wise mean aggregation~\citep{li-etal-2018-towards} are both equivalent to $p=1$. 
Class-wise quadratic mean aggregation~\citep{romanov2019s} is essentially $p=2$, which focuses more on those classes with higher bias.
Similarly, group-wise variance aggregation~\citep{lum2022biasing} is proportional to the $p=2$ setting, implying that groups with larger gaps will influence results more.

The additional advantage of using generalized mean aggregation is that comparison across arbitrary $p$ values can be easily stated. For example, group-wise aggregation is the $p=-5$ generalized mean with respect to the \textbf{score} units in a toxicity classification competition,\footnote{Jigsaw Unintended Bias in Toxicity Classification: \url{https://www.kaggle.com/competitions/jigsaw-unintended-bias-in-toxicity-classification/}}  meaning that evaluation focuses more on groups with lower performance.

\paragraph{Other Aggregation Methods:}
Although generalized mean aggregation is a powerful tool for describing and interpreting the aggregation process, there are other ways that need further discussion.
Previous work has also considered assigning different weights under aggregation, for instance, \citet{kearns2018preventing} assign larger weights to groups with larger populations. 
Such aggregations can be implemented as the weighted generalized mean:
$\gM_{p, \evw}(\evv) = \left(\frac{1}{n}\sum_{i=1}^{n}\evw_{i}\evv_{i}^p\right)^{\frac{1}{p}}$, where $\evw$ is the weight vector, and $\sum_{i=1}^{n}\evw_i=1$.

An example of the weighted generalized mean for class-wise aggregation is binary aggregation that only considers the positive class in a binary classification setting~\citep{hardt2016equality, zafar2017fairness,kearns2018preventing,zhao2019conditional,NEURIPS2020_07fc15c9,han2021diverse,lum2022biasing}.
The positive class is often treated as the ``advantaged'' outcome, so the analysis focuses solely on the positive class.
Moreover, the one-versus-all trick is not necessary for the binary setting, and natural derivations of the confusion matrix can be used to refer to a particular class, e.g., TPR for the positive class and TNR for the negative class.

\begin{figure*}[!ht]
    \centering
    \includegraphics[width=0.95\textwidth]{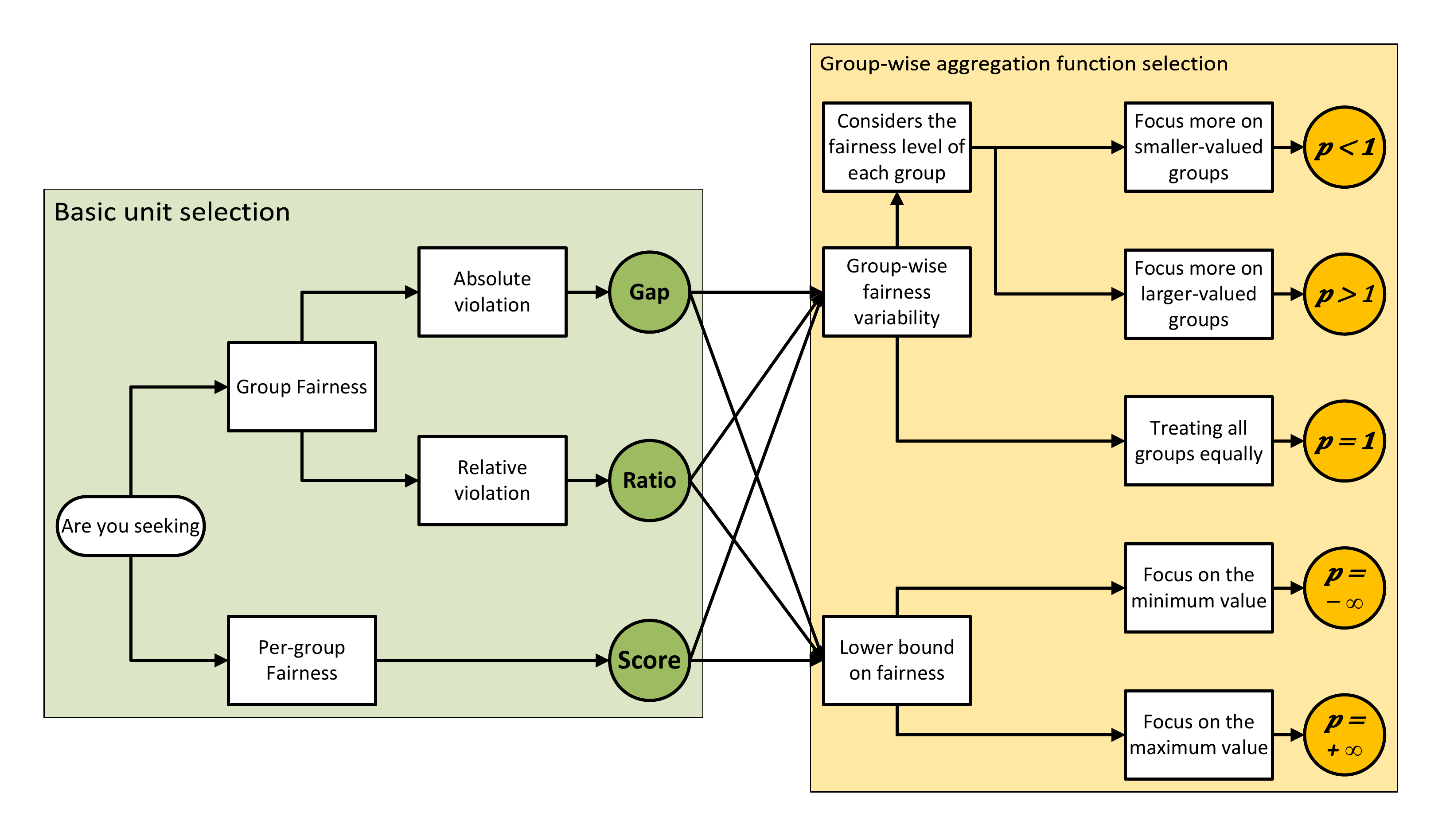}
    \caption{Decision path for exponent parameter selection for generalized mean aggregation. 
    }
    \label{fig:Decision_path}
\end{figure*}

\subsection{Recommendations}

We are now in a position to be able to provide recommendations for fairness evaluation. 

Following the work of \citet{dodge-etal-2019-show}, we provide a checklist for fairness evaluation metric aggregation:
\begin{itemize}
    \item[$\square$] Statistics of the dataset $\train$, e.g., the probability table of the joint distribution of $\ervy$ and $\ervz$, and the size of each partition.
    \item[$\square$] The evaluation metric $m$ (e.g., TPR for \emph{EO} fairness).
    \item[$\square$] The basic unit of group-wise aggregation, including \textbf{score}, \textbf{gap}, and \textbf{ratio}, or other possible measures.
    \item[$\square$] The aggregation function for \textbf{group-wise} aggregation, and the corresponding motivation.
    \item[$\square$] The aggregation function for \textbf{class-wise} aggregation, and the corresponding motivation.
\end{itemize}

Although the particular choice of evaluation dataset $\train$ and evaluation metric $m$ are critical to the overall evaluation, they are not the main focus of this paper. 
Rather, we provide guidance based on the selection of basic unit, and methods for group- and class-wise aggregation, as detailed in  \Figref{fig:Decision_path}.

\paragraph{Basic Unit Selection:}
The circles in  \Figref{fig:Decision_path} annotated as \textbf{Score}, \textbf{Ratio}, and \textbf{Gap} are the decision points for basic unit selection.

If per-group fairness is the primary criterion (e.g., Rawlsian Max-Min fairness~\citep{rawls2001justice}), using \textbf{score} is the best practice, which maintains the original values under aggregation.
On the other hand, if inter-group fairness is critical, \textbf{gap} and \textbf{ratio} are more appropriate choices. \textbf{Gap} reflects disparities in the same scale as the per-group scores, and is easy to visualize (e.g.\ as differences in height between clustered bars). However, if one wished to measure disparities in relative terms, e.g., the $q\%$-rule~\citep{feldman2015certifying}, \textbf{ratio} is a better choice than \textbf{gap}.

\paragraph{Group-wise Aggregation Function Selection:}
The selection of group-wise aggregation functions is shown as the exponent parameters of the \emph{generalized mean} aggregation.

Measuring extrema is similar to the notion of per-group fairness, and encourages improvements in the worst-performing groups.
For basic units where smaller is fairer, e.g., \textbf{gap}, aggregation generally focuses on the maximum~\citep{yang2020fairness}, i.e., $p=+\infty$.
For units like \textbf{score} ($p=-\infty$), on the other hand, the minimum value should be measured, as a lower bound.

Besides extrema, it is also reasonable to measure fairness variability across groups.
A typical choice is taking the arithmetic mean (i.e., $p=1$) across all groups, which implicitly assigns equal importance to each individual group.
Similar to the signs in extremum aggregations, the value of $p$ in variability aggregations should be selected based on the type of basic unit, to focus more on worse-performing groups.
Taking the \textbf{gap} unit as an example, the quadratic mean ($p=2$) is influenced more by larger gaps than the arithmetic mean. 
Moreover, quadratic mean aggregation based on \textbf{gap} is essentially the standard deviation of \textbf{scores}, and can be used to reconstruct variance aggregation~\citep{lum2022biasing}.

\paragraph{Class-wise Aggregation Function Selection:}
Although our focus is on the fairness evaluation metric, class-wise aggregation is almost identical to aggregation methods for general utility metrics.
Binary aggregation for fairness is the same as utility metrics, while mean aggregation~\citep{li-etal-2018-towards,wang2019balanced} for fairness evaluation is equivalent to ``macro''-averaging in general evaluation.

\section{Model Comparison}
\label{sec:model_comparison}

This section focuses on comparison of debiasing methods when considering utility and fairness simultaneously.
We first introduce the performance--fairness trade-off curve (PFC) for debiasing methods, and then discuss the limitations of existing comparison frameworks.
Finally, we propose a new metric, namely the area under the curve (AUC) w.r.t.\ PFC, which integrates existing approaches and reflects the overall goodness of a method. 

\begin{figure}[!t]
    \centering
     \begin{subfigure}[b]{0.45\textwidth}
         \centering
         \includegraphics[width=\linewidth]{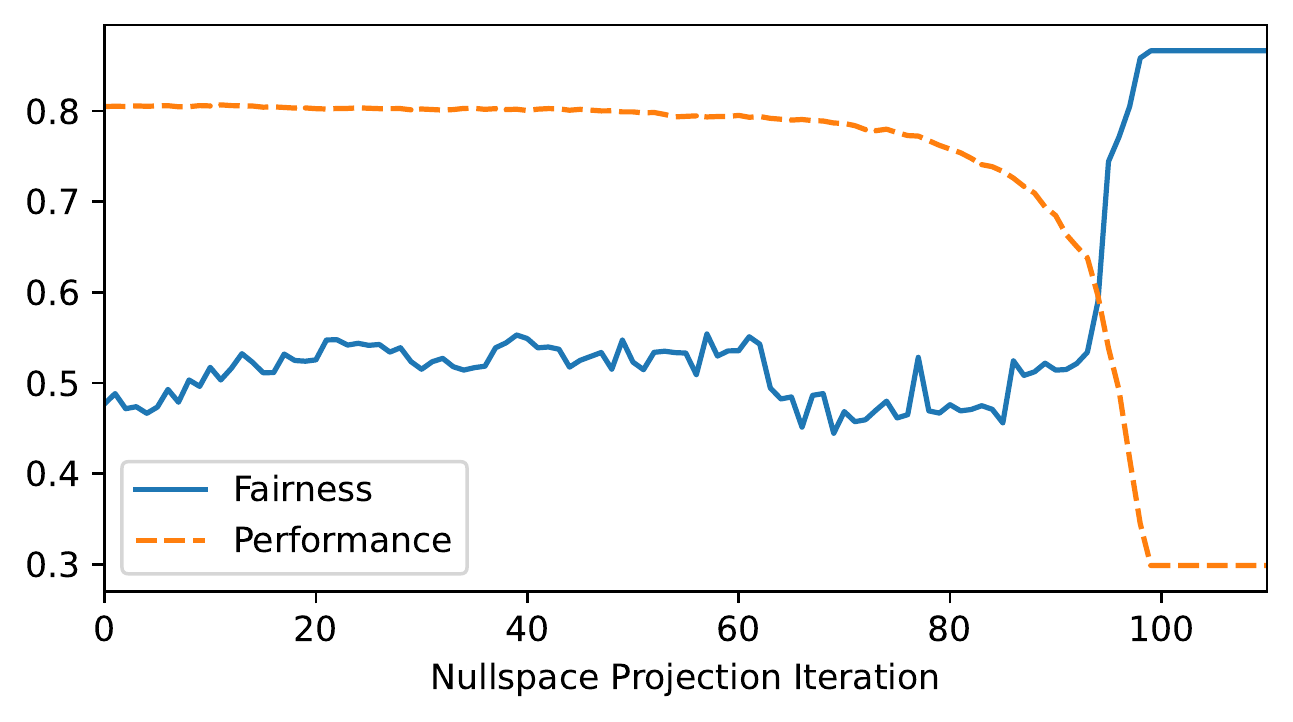}
         \caption{Tuning \INLP trade-off hyperparameter}
         \label{fig:Bios_both_INLP_Iteration}
    \end{subfigure}
    \\ \hfill \hfill 
    \begin{subfigure}[b]{0.45\textwidth}
        \centering
        \includegraphics[width=\linewidth]{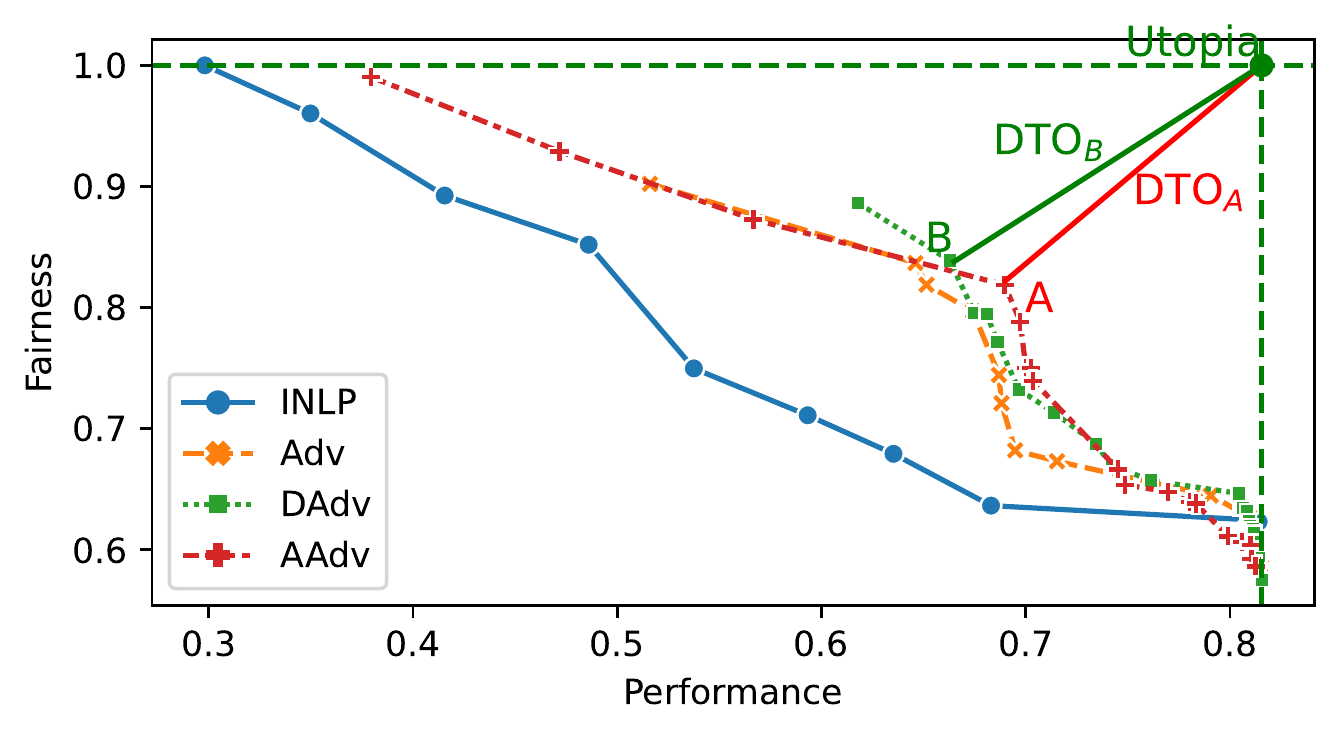}
        \caption{Performance--fairness trade-offs}
        \label{fig:Bios_both_sample_tradeoffs}
    \end{subfigure}
    \hfill
    \caption{\Figref{fig:Bios_both_INLP_Iteration} shows an example of performance and fairness with respect to different iterations of the nullspace projection in \INLP.
    \Figref{fig:Bios_both_sample_tradeoffs} presents the Pareto frontiers in the performance--fairness trade-offs of four debiasing methods in recent work. 
    \Figref{fig:Bios_both_sample_tradeoffs} also provides an illustration of \DTO. 
    The green dashed vertical and horizontal lines denote the best performance and fairness, respectively, and their intersection point is the \emph{Utopia} point. The length of a line, e.g., the red line from A to the Utopia point, is the \DTO for the corresponding candidate model.}
    \label{fig:INLP_tune}
\end{figure}


\subsection{Performance and Fairness Metrics}
\label{sec:performance_and_fairness_metrics}
As discussed in \Cref{sec:fairness_metric}, there are many options to measure performance and fairness.
This paper is generalizable to all different metrics, but for illustration purposes, we follow~\citet{ravfogel-etal-2020-null,subramanian-etal-2021-evaluating} and~\citet{han2022fairlib} in measuring the overall accuracy and equal opportunity fairness.

Specifically, equal opportunity fairness measures TPR disparities across groups, such as the situation depicted in Figure~\ref{fig:tpr_row_metrics}. 
We use the TPR \textbf{gap} across subgroups to capture absolute disparities. 
For group-wise aggregation, we treat all groups equally in computing the unweighted sum of $gap$ scores ($\propto p=1$).
In the last step, class-wise aggregation, we focus more on less fair classes by using root mean square aggregation ($p=2$).

\subsection{Performance--Fairness Trade-off}
It has been observed in previous work that a performance--fairness trade-off exists in bias mitigation~\citep{li-etal-2018-towards,wang2019balanced,ravfogel-etal-2020-null,han2022towards,shen2022optimising}.

Typically, debiasing methods involve a trade-off hyperparameter to control the extent to which the model sacrifices performance for fairness.
Examples of such trade-off hyperparameters include: (1) interpolation between the target and vanilla data distribution for pre-processing approaches~\citep{wang2019balanced,han2021balancing}; (2) the strength of additional loss terms for loss manipulation methods~\citep{zhao2019conditional,NEURIPS2020_07fc15c9, han2021diverse,shen2022does}; (3) the target level of fairness in constrained optimization~\citep{kearns2018preventing,subramanian-etal-2021-evaluating}; and (4) the number of debiasing iterations for post-hoc bias mitigation methods~\citep{ravfogel-etal-2020-null}.

Taking \INLP~\citep{ravfogel-etal-2020-null} as an example, which debiases by iteratively projecting the text embeddings to the nullspace of the protected attributes, \Figref{fig:Bios_both_INLP_Iteration} shows performance and fairness with respect to the number of nullspace projection iterations.\footnote{Without loss of generality, we assume that for both fairness and performance, larger is better.}
It is clear that more iterations lead to better fairness at the cost of performance.

Instead of looking at performance/fairness for different trade-off hyperparameter values, it is more meaningful to focus on the Pareto frontiers in trade-off plots (\Figref{fig:Bios_both_sample_tradeoffs}), where each point 
corresponds to a particular value of the trade-off hyperparameter in \Figref{fig:Bios_both_INLP_Iteration}.
The frontiers represent the best fairness that can be achieved at different performance levels, and vice versa.

One limitation of a trade-off plot is that it is hard to make quantitative conclusions based on the plot itself, and we cannot conclude that one method is better than another if there exists any intersection of their trade-off curves.
As shown in \Figref{fig:Bios_both_sample_tradeoffs}, in addition to \INLP, we also include the trade-off curves for three recent adversarial debiasing variants: \Adv~\citep{li-etal-2018-towards}, \DAdv~\citep{han2021diverse}, and \AAdv~\citep{han2022towards}.
Although \AAdv is better than the other methods under most conditions, there exist intersections between their trade-off curves.
As such, we can only state that \AAdv is better than other methods within particular ranges, which is insufficient for making a precise comparison, especially when comparing multiple debiasing methods (as demonstrated in \Figref{fig:Bios_both_sample_tradeoffs}).

\subsection{Model Selection}
\label{sec:Model_Selection}

In order to conduct quantitative comparisons across different debiasing methods, current practice is to select a particular point on the frontier for each method, and then compare both the performance and fairness of the selected points.

One problem associated with model selection is that typically, no single method simultaneously achieves the best performance and fairness. 
For example, as shown in \Figref{fig:Bios_both_sample_tradeoffs}, if points \textbf{A} and \textbf{B} were the selected models for \AAdv and \DAdv, respectively, \textbf{A} would represent better performance and \textbf{B} better fairness. 
As such, although we have actual numbers for quantitative comparison, it is still hard to conclude which method is best.

\paragraph{Distance to the Optimal:}
To address this problem, we propose to measure the \textbf{D}istance \textbf{T}o the \textbf{O}ptimal point (``DTO'') to quantify the performance--fairness trade-off~\citep{salukvadze1971concerning, marler2004survey, han2021balancing}.
A model is said to outperform others if it achieves a smaller \DTO, i.e.\ the distance to the optimal (Utopia) point (the point at which performance and fairness are the maximum possible values) is minimized.
Figure~\ref{fig:Bios_both_sample_tradeoffs} illustrates the calculation of \DTO for \textbf{A} and \textbf{B}, where the optimal point is the top-right corner\footnote{The location of the utopia point \newChanges{and the scale of metrics are} discussed in \Cref{sec:AUC_extension}.} and \DTO is measured by the normalized Euclidean distance (the length of the green and red lines) to the optimal point.

A notable advantage of \DTO is that a Pareto improvement implies a smaller value of \DTO.
Therefore, DTO can be seen as relaxation of Pareto improvements, and the smallest DTO must be achieved by a point on the Pareto frontier.
A key limitation of \DTO is that it quantifies the trade-off of a single model rather than the full frontier, presupposing some means of model selection.
This has been somewhat arbitrary in prior work, which is the problem we now seek to address.

\begin{figure}[!t]
    \centering
    \includegraphics[width=\linewidth]{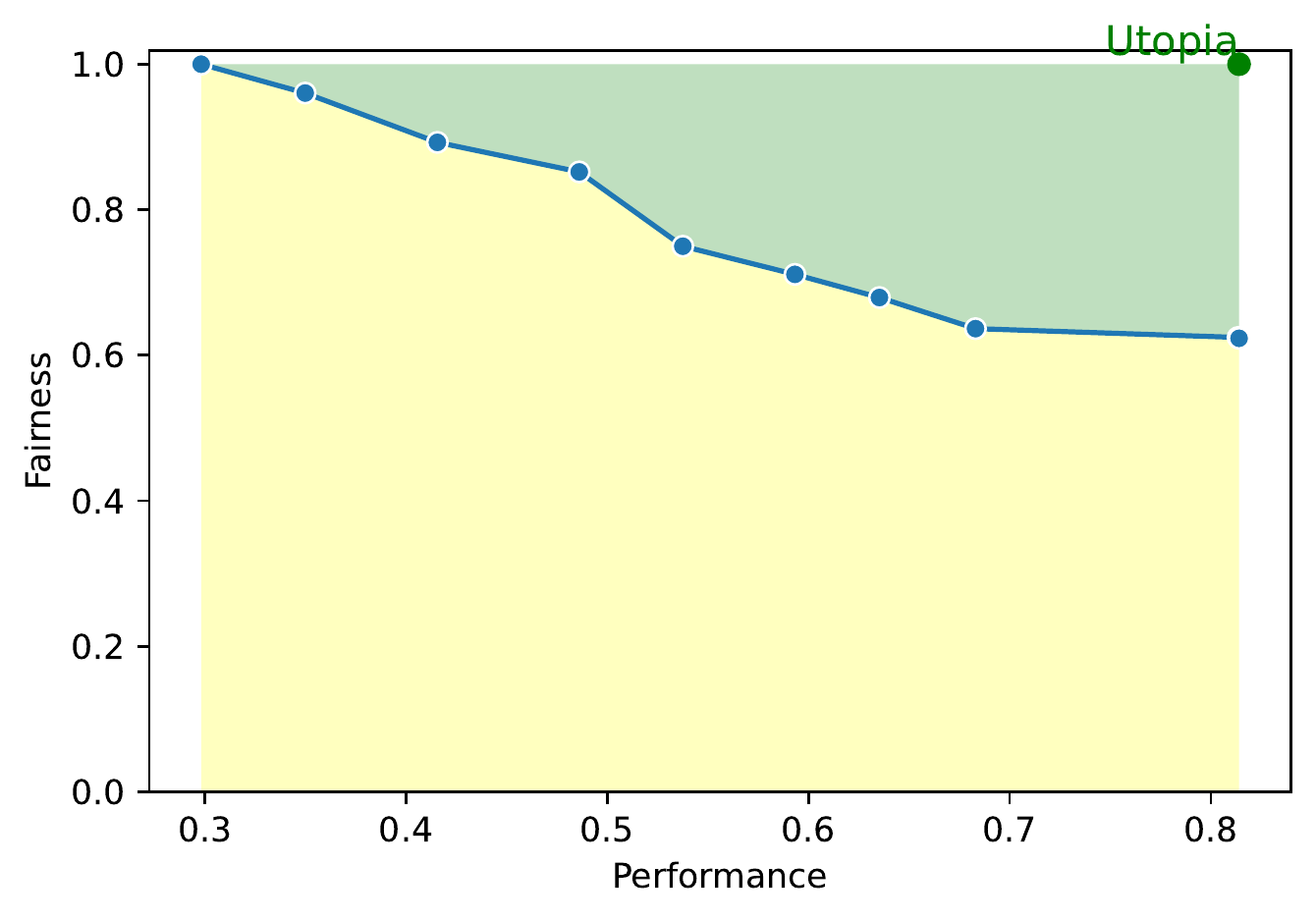}
    \caption{AUC of the performance--fairness trade-off curve. Taking the trade-off curve of \INLP as an example, the yellow shaded area refers to the AUC-PFC, and the green shaded area refers to the integration of DTO in polar coordinates. }
    \label{fig:auc_pfc}
\end{figure}

\paragraph{Selection Criteria:}
Similar to the aggregation of fairness metrics, model selection should be done in a domain-specific manner. 
Previous work has used different criteria for model selection, including: (1) minimum loss~\citep{hashimoto2018fairness,li-etal-2018-towards}; (2) maximum utility~\citep{NEURIPS2020_07fc15c9}, e.g., based on accuracy or F-measure; (3) manual selection based on visual inspection of the trade-off curve~\citep{elazar2018adversarial, ravfogel-etal-2020-null}; (4) constrained selection~\citep{han2021diverse,subramanian-etal-2021-evaluating}, by selecting the best fairness constrained to a particular level of performance, and vice versa; and (5) minimising \DTO~\citep{han2022towards,shen2022optimising}.

\begin{figure*}[ht]
     \centering
     \begin{subfigure}[b]{0.3\textwidth}
         \centering
         \includegraphics[width=\textwidth]{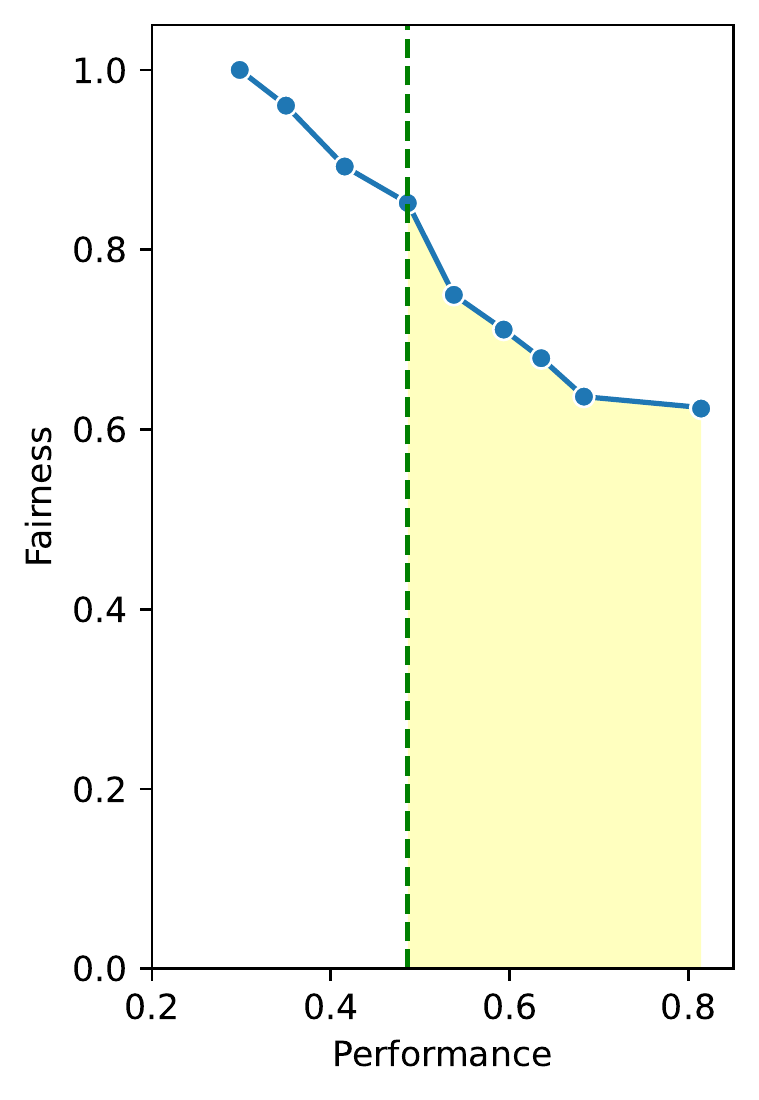}
         \caption{Accuracy is better than $0.49$.}
         \label{fig:AUC_performance_constrained}
     \end{subfigure}
     \hfill
     \begin{subfigure}[b]{0.3\textwidth}
         \centering
         \includegraphics[width=\textwidth]{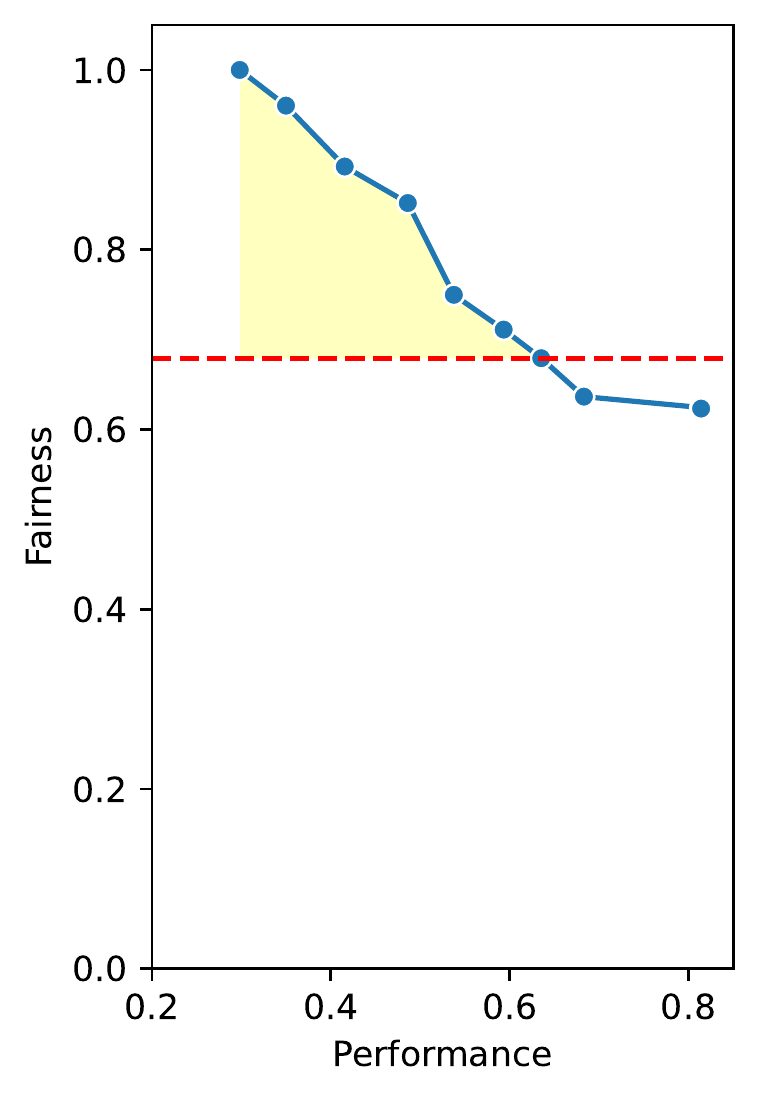}
         \caption{Fairness is better than $0.68$.}
         \label{fig:AUC_fairness_constrained}
     \end{subfigure}
     \hfill
     \begin{subfigure}[b]{0.3\textwidth}
         \centering
         \includegraphics[width=\textwidth]{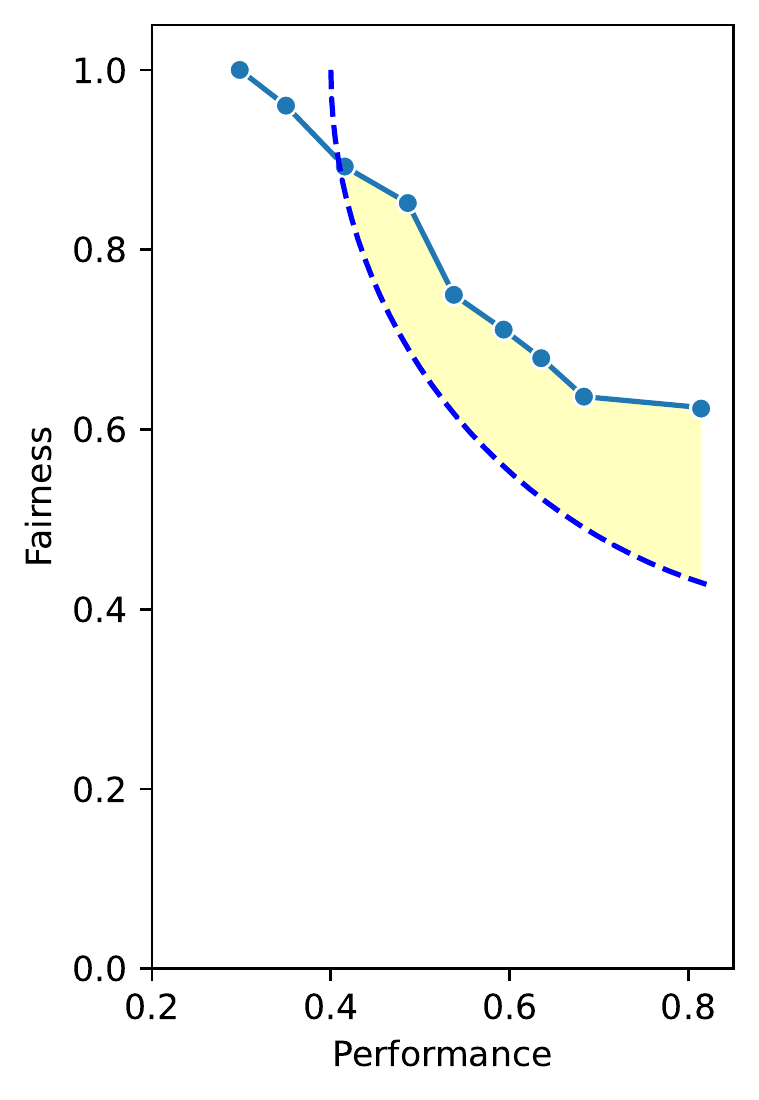}
         \caption{DTO is better than $0.60$.}
         \label{fig:AUC_dto_constrained}
     \end{subfigure}
        \caption{Yellow shaded area denote the partial AUC-PFC score computed in the region where a particular condition applied.}
        \label{fig:partial_auc}
\end{figure*}

Selection based on minimum loss and maximum utility is identical to classic model selection, and does not consider fairness explicitly.
The other three types of criteria are based on trade-offs, differentiated by the method for aggregating fairness and performance.

Such inconsistency in model selection makes it very hard to rigorously compare methods.
The question we want to address is: \emph{how can we quantitatively compare methods without model selection?} 

\subsection{AUC-PFC}
\label{sec:AUC-PFC}

Recall that \DTO is a metric for measuring the goodness of the trade-off of a particular model, and model selection is a process for selecting a particular frontier model from the Pareto curve.
To address the problem associated with model selection, we propose to integrate \DTO over the whole performance--fairness curve (PFC).
Specifically, we integrate \DTO in a polar coordinate system, where the reference point (pole) is the optimal point.
Given that the DTO of a point on the trade-off curve is also the distance from the pole in the polar coordinate system, the trade-off curve can be treated as a function that maps angular coordinates to DTO.
For example, as shown in \Figref{fig:auc_pfc}, the green area denotes the region enclosed by the performance--fairness trade-off curve of \INLP  and the utopia point with fairness = $1.00$ and performance = $0.82$.\footnote{In the interests of consistent comparison, the Utopia point is typically $(1,1)$, as in Table~\ref{table:part_selected_results}. In practice, this does not affect the calculation of AUC-PFC, as we discuss in Appendix~\ref{sec:AUC_extension}.}

Alternatively, we can interpret the proposed metric from the performance--fairness perspective, in calculating the area under the Pareto curve, and subtracting this from the area under the optimal Pareto curve defined by the optimal point.

\newChanges{The magnitude of AUC-PFC differs from a single metric; for example, a $0.0001$ improvement in the AUC-PFC score is equivalent to a 1 percentage point (pp) boost in both performance and fairness ($0.01 \times 0.01$).}

\paragraph{Partial AUC-PFC}
In practice, worse performance or fairness can be unacceptable, for example, one may want to prioritize fairness in particular applications.
To address this problem, we present the Partial AUC-PFC score to focus on a specific region of the PFC Curve, where the AUC-PFC score is computed w.r.t.\ specific acceptable levels of performance and fairness.

\Cref{fig:AUC_performance_constrained} shows an example of partial AUC-PFC, where the region can be considered if corresponding accuracy is better than $0.49$. 
Similarly, Figures~\ref{fig:AUC_fairness_constrained} and~\ref{fig:AUC_dto_constrained} show partial AUC-PFC scores with respect to particular fairness and \DTO constraints. 

With the partial AUC-PFC metric, one can explicitly compare different methods with a single number, and w.r.t.\ particular values of performance and fairness.

\begin{table*}[!t]
\renewrobustcmd{\bfseries}{\fontseries{b}\selectfont}
\centering
\small
\sisetup{
round-mode = places,
}%
\begin{tabular}{
l
S[table-format=2.1, round-precision = 1]
S[table-format=2.1, round-precision = 1]
S[table-format=2.1, round-precision = 1]
S[table-format=2.1, round-precision = 1]
S[table-format=2.1, round-precision = 1]
S[table-format=2.1, round-precision = 1]
S[table-format=2.1, round-precision = 1]
S[table-format=2.1, round-precision = 1]
}

\toprule
 & \multicolumn{7}{c}{Selection Criteria} & \\
\cmidrule(lr){2-8}
\bf Method  & \multicolumn{1}{c}{\bf \DTO} & \multicolumn{1}{c}{\bf P} & \multicolumn{1}{c}{\bf P@F$+$5\%} & \multicolumn{1}{c}{\bf P@F$+$10\%} & \multicolumn{1}{c}{\bf F} & \multicolumn{1}{c}{\bf F@P$-$5\%} & \multicolumn{1}{c}{\bf F@P$-$10\%} & AUC $\uparrow$\\ 
\midrule
  \INLP~\citep{ravfogel-etal-2020-null} & 41.931134 & \bfseries{41.9} & 52.580521 & 52.580521 & 70.187785 & 41.931134 & 41.931134 & 39.82569166553376\\
   \Adv~\citep{li-etal-2018-towards} & 38.977707 & 44.560375 & 43.344399 & 41.841526 & 49.377388 & 41.224841 & \bfseries{41.2} & 43.572503606466517\\
  \DAdv~\citep{han2021diverse} & 37.943508 & 44.744975 & 41.011131 & 40.454108 & \bfseries{39.9} & \bfseries{40.4} & 41.911101 & \bfseries{44.5}\\
  \AAdv~\citep{han2022towards} & \bfseries{36.9} & 45.413214 & \bfseries{39.5} & \bfseries{39.0} & 62.060511 & 43.768721 & 42.796501 & 44.00661978816351\\
\bottomrule
\end{tabular} 
\caption{\DTO scores of selected models over the \Bios dataset (smaller is better), based on the distances from mean performance and fairness to (1,1) over the test set. Models are selected based on the criterion listed for each column over the development set. The final column is the AUC, which does not involve model selection. \textbf{Bold} = the best score per column. See Appendix~\ref{sec:full_comparison} for the full results.}
\label{table:part_selected_results}
\end{table*}

\subsection{Case Studies}
\label{sec:case_studies}

\paragraph{Experimental Details:}
We conduct experiments over the \Bios dataset~\citep{de2019bias} which was augmented with economic status by~\citet{subramanian-etal-2021-evaluating}, resulting in 28 professions as the target label and 4 intersectional demographic groups.\footnote{Performance and fairness metrics have been introduced in~\Cref{sec:performance_and_fairness_metrics}.}
We use public implementations for all models in our experiments, primarily in \emph{fairlib}~\citep{han2022fairlib}. 

\paragraph{Results:}
In \Cref{table:part_selected_results}, we investigate 7 different selection criteria and report the \DTO score over the test set. 
Specifically, we conduct model selection over the development set based on: (1) minimum \DTO; (2) maximum performance (\textbf{P}); (3) maximum performance within a fairness threshold of 5\% improvement (\textbf{P@F$+$5\%}); (4) maximum performance within a fairness threshold of 10\% improvement (\textbf{P@F$+$10\%}); (5) maximum fairness (\textbf{F}); (6) maximum fairness within a performance trade-off threshold of 5\% (\textbf{F@P$-$5\%}); and (7) maximum fairness within a performance trade-off threshold of 10\% (\textbf{F@P$-$10\%}).
Criteria (3), (4), (6), and (7) are constrained selections as discussed in \Cref{sec:Model_Selection}, where we select the model with the highest fairness/performance within 5/10\% of performance trade-off/fairness improvement relative to the \Standard model.
Taking F@P$-$10\% as an example, the model with the highest fairness is selected within 10\% performance trade-off over the vanilla model performance (i.e., with performance greater than 72\% ($82\%-10\%$)).
Similarly, P@F$+$5\% selects the model with highest performance subject to at least 5\% fairness improvement over the  vanilla model ($63\%=58\%+5\%$).


It can be seen that each of the four methods is the best for at least one selection criteria, as a stark illustration of our claim about model selection criteria biasing any possible conclusions about which method is best. 
For example, \INLP and \Adv are the best methods with respect to selection criteria \textbf{P} and \textbf{F@P$-$10\%}, respectively.

On the contrary, our proposed AUC-PFC score (in the final column of \Cref{table:part_selected_results}) is unaffected by model selection, and reflects the overall trade-off of a method. 
Consistent with the trend in \Cref{fig:Bios_both_sample_tradeoffs}, AUC scores in \Cref{table:part_selected_results} are smaller for worse-performing methods, e.g., \INLP, and larger for better-performing methods such as \AAdv and \DAdv.
Moreover, as the trade-off curves for \AAdv and \DAdv overlap one another (see \Cref{fig:Bios_both_sample_tradeoffs}), it is hard to pick a winner visually, let alone make quantitative comparisons.
By using the AUC-PFC metric, we can conclude that overall, \DAdv is slightly better than \AAdv over this dataset.

\paragraph{Discussion}
The current DTO calculation assumes that users have no preference for performance over fairness or vice versa, where in practice it is possible that the choice of the fairness metric could be influenced by task-specific goals or the relative importance of fairness. Such problems have been widely studied in the literature on multi-objective learning, and a typical line of work is weighted generalized mean, which incorporates additional weight parameters in the generalized mean framework to reflect the importance or preference of each objective. 

\section{Conclusion}
We have discussed the current practice in evaluation, model selection, and method comparison in the fairness literature, and shown how current practice in experimental fairness lacks rigour and consistency. 
We made recommendations for selecting a fairness evaluation metric, and introduced a new metric for measuring the overall performance--fairness trade-off of a method.

\section*{Acknowledgements}
We thank Lea Frermann, Aili Shen, and Shivashankar Subramanian for their discussions and inputs.
We thank the anonymous reviewers for their helpful feedback and suggestions. 
This work was funded by the Australian Research Council, Discovery grant DP200102519.

\section*{Limitations}
This paper focuses on the notion of group fairness, under the assumption that each individual belongs to a particular demographic group. One
limitation of methods in this space is that the demographic attributes must be observed (for the development and test data, at least)  in order to evaluate fairness.


We only investigate the proposed evaluation aggregation framework in a classification setting. However, our framework is naturally generalizable to other tasks with discrete outcomes, such as generation and sequential tagging. Moreover, in terms of continuous labels, such as regression, one can skip the class-wise aggregation.

\section*{Ethical Considerations}
This work focuses on current practice in fairness evaluation and method comparison. \newChanges{Our proposed ``checklist'' recommendations are specific to the fairness literature and complement existing frameworks, to encourage future research to think carefully about harms and what type of fairness is appropriate.}

Demographics are assumed to be available only for evaluation purposes and are not used for model training or inference.
We only use attributes that the user has self-identified in our experiments. 
All data and models in this study are publicly available and used under strict ethical guidelines.

\bibliography{custom}

\begin{thebibliography}{41}
\expandafter\ifx\csname natexlab\endcsname\relax\def\natexlab#1{#1}\fi

\bibitem[{Badjatiya et~al.(2019)Badjatiya, Gupta, and
  Varma}]{badjatiya2019stereotypical}
Pinkesh Badjatiya, Manish Gupta, and Vasudeva Varma. 2019.
\newblock Stereotypical bias removal for hate speech detection task using
  knowledge-based generalizations.
\newblock In \emph{The World Wide Web Conference}, pages 49--59.

\bibitem[{Barocas et~al.(2019)Barocas, Hardt, and
  Narayanan}]{barocas-hardt-narayanan}
Solon Barocas, Moritz Hardt, and Arvind Narayanan. 2019.
\newblock \emph{Fairness and Machine Learning}.
\newblock \url{http://www.fairmlbook.org}.

\bibitem[{Berk et~al.(2021)Berk, Heidari, Jabbari, Kearns, and
  Roth}]{berk2021fairness}
Richard Berk, Hoda Heidari, Shahin Jabbari, Michael Kearns, and Aaron Roth.
  2021.
\newblock Fairness in criminal justice risk assessments: The state of the art.
\newblock \emph{Sociological Methods \& Research}, 50(1):3--44.

\bibitem[{Bird et~al.(2020)Bird, Dud{\'\i}k, Edgar, Horn, Lutz, Milan, Sameki,
  Wallach, and Walker}]{bird2020fairlearn}
Sarah Bird, Miro Dud{\'\i}k, Richard Edgar, Brandon Horn, Roman Lutz, Vanessa
  Milan, Mehrnoosh Sameki, Hanna Wallach, and Kathleen Walker. 2020.
\newblock Fairlearn: A toolkit for assessing and improving fairness in ai.
\newblock \emph{Microsoft, Tech. Rep. MSR-TR-2020-32}.

\bibitem[{Bolukbasi et~al.(2016)Bolukbasi, Chang, Zou, Saligrama, and
  Kalai}]{bolukbasi2016man}
Tolga Bolukbasi, Kai-Wei Chang, James~Y Zou, Venkatesh Saligrama, and Adam~T
  Kalai. 2016.
\newblock Man is to computer programmer as woman is to homemaker? debiasing
  word embeddings.
\newblock In \emph{Advances in Neural Information Processing Systems}, pages
  4349--4357.

\bibitem[{Caliskan et~al.(2017)Caliskan, Bryson, and
  Narayanan}]{caliskan2017semantics}
Aylin Caliskan, Joanna~J Bryson, and Arvind Narayanan. 2017.
\newblock Semantics derived automatically from language corpora contain
  human-like biases.
\newblock \emph{Science}, 356(6334):183--186.

\bibitem[{Chouldechova(2017)}]{chouldechova2017fair}
Alexandra Chouldechova. 2017.
\newblock Fair prediction with disparate impact: A study of bias in recidivism
  prediction instruments.
\newblock \emph{Big Data}, 5(2):153--163.

\bibitem[{De-Arteaga et~al.(2019)De-Arteaga, Romanov, Wallach, Chayes, Borgs,
  Chouldechova, Geyik, Kenthapadi, and Kalai}]{de2019bias}
Maria De-Arteaga, Alexey Romanov, Hanna Wallach, Jennifer Chayes, Christian
  Borgs, Alexandra Chouldechova, Sahin Geyik, Krishnaram Kenthapadi, and
  Adam~Tauman Kalai. 2019.
\newblock Bias in bios: A case study of semantic representation bias in a
  high-stakes setting.
\newblock In \emph{Proceedings of the Conference on Fairness, Accountability,
  and Transparency}, pages 120--128.

\bibitem[{Dev and Phillips(2019)}]{dev2019attenuating}
Sunipa Dev and Jeff Phillips. 2019.
\newblock Attenuating bias in word vectors.
\newblock In \emph{The 22nd International Conference on Artificial Intelligence
  and Statistics}, pages 879--887. PMLR.

\bibitem[{Devlin et~al.(2019)Devlin, Chang, Lee, and
  Toutanova}]{devlin2019bert}
Jacob Devlin, Ming-Wei Chang, Kenton Lee, and Kristina Toutanova. 2019.
\newblock Bert: Pre-training of deep bidirectional transformers for language
  understanding.
\newblock In \emph{Proceedings of the 2019 Conference of the North American
  Chapter of the Association for Computational Linguistics: Human Language
  Technologies, Volume 1 (Long and Short Papers)}, pages 4171--4186.

\bibitem[{D{\'\i}az et~al.(2018)D{\'\i}az, Johnson, Lazar, Piper, and
  Gergle}]{diaz2018addressing}
Mark D{\'\i}az, Isaac Johnson, Amanda Lazar, Anne~Marie Piper, and Darren
  Gergle. 2018.
\newblock Addressing age-related bias in sentiment analysis.
\newblock In \emph{Proceedings of the 2018 CHI Conference on Human Factors in
  Computing Systems}, pages 1--14.

\bibitem[{Dodge et~al.(2019)Dodge, Gururangan, Card, Schwartz, and
  Smith}]{dodge-etal-2019-show}
Jesse Dodge, Suchin Gururangan, Dallas Card, Roy Schwartz, and Noah~A. Smith.
  2019.
\newblock \href {https://doi.org/10.18653/v1/D19-1224} {Show your work:
  Improved reporting of experimental results}.
\newblock In \emph{Proceedings of the 2019 Conference on Empirical Methods in
  Natural Language Processing and the 9th International Joint Conference on
  Natural Language Processing (EMNLP-IJCNLP)}, pages 2185--2194, Hong Kong,
  China. Association for Computational Linguistics.

\bibitem[{Dwork et~al.(2012)Dwork, Hardt, Pitassi, Reingold, and
  Zemel}]{dwork2012fairness}
Cynthia Dwork, Moritz Hardt, Toniann Pitassi, Omer Reingold, and Richard Zemel.
  2012.
\newblock Fairness through awareness.
\newblock In \emph{Proceedings of the 3rd Innovations in Theoretical Computer
  Science Conference}, pages 214--226.

\bibitem[{Elazar and Goldberg(2018)}]{elazar2018adversarial}
Yanai Elazar and Yoav Goldberg. 2018.
\newblock Adversarial removal of demographic attributes from text data.
\newblock In \emph{Proceedings of the 2018 Conference on Empirical Methods in
  Natural Language Processing}, pages 11--21.

\bibitem[{Feldman et~al.(2015)Feldman, Friedler, Moeller, Scheidegger, and
  Venkatasubramanian}]{feldman2015certifying}
Michael Feldman, Sorelle~A Friedler, John Moeller, Carlos Scheidegger, and
  Suresh Venkatasubramanian. 2015.
\newblock Certifying and removing disparate impact.
\newblock In \emph{Proceedings of the 21th ACM SIGKDD International Conference
  on Knowledge Discovery and Data Mining}, pages 259--268.

\bibitem[{Goldfarb-Tarrant et~al.(2021)Goldfarb-Tarrant, Marchant,
  Mu{\~n}oz~S{\'a}nchez, Pandya, and
  Lopez}]{goldfarb-tarrant-etal-2021-intrinsic}
Seraphina Goldfarb-Tarrant, Rebecca Marchant, Ricardo Mu{\~n}oz~S{\'a}nchez,
  Mugdha Pandya, and Adam Lopez. 2021.
\newblock \href {https://doi.org/10.18653/v1/2021.acl-long.150} {Intrinsic bias
  metrics do not correlate with application bias}.
\newblock In \emph{Proceedings of the 59th Annual Meeting of the Association
  for Computational Linguistics and the 11th International Joint Conference on
  Natural Language Processing (Volume 1: Long Papers)}, pages 1926--1940,
  Online. Association for Computational Linguistics.

\bibitem[{Han et~al.(2021)Han, Baldwin, and Cohn}]{han2021diverse}
Xudong Han, Timothy Baldwin, and Trevor Cohn. 2021.
\newblock \href {https://www.aclweb.org/anthology/2021.eacl-main.239} {Diverse
  adversaries for mitigating bias in training}.
\newblock In \emph{Proceedings of the 16th Conference of the European Chapter
  of the Association for Computational Linguistics: Main Volume}, pages
  2760--2765.

\bibitem[{Han et~al.(2022{\natexlab{a}})Han, Baldwin, and
  Cohn}]{han2021balancing}
Xudong Han, Timothy Baldwin, and Trevor Cohn. 2022{\natexlab{a}}.
\newblock \href {https://aclanthology.org/2022.emnlp-main.779} {Balancing out
  bias: Achieving fairness through balanced training}.
\newblock In \emph{Proceedings of the 2022 Conference on Empirical Methods in
  Natural Language Processing}, pages 11335--11350, Abu Dhabi, United Arab
  Emirates. Association for Computational Linguistics.

\bibitem[{Han et~al.(2022{\natexlab{b}})Han, Baldwin, and
  Cohn}]{han2022towards}
Xudong Han, Timothy Baldwin, and Trevor Cohn. 2022{\natexlab{b}}.
\newblock Towards equal opportunity fairness through adversarial learning.
\newblock \emph{arXiv preprint arXiv:2203.06317}.

\bibitem[{Han et~al.(2022{\natexlab{c}})Han, Shen, Li, Frermann, Baldwin, and
  Cohn}]{han2022fairlib}
Xudong Han, Aili Shen, Yitong Li, Lea Frermann, Timothy Baldwin, and Trevor
  Cohn. 2022{\natexlab{c}}.
\newblock \href {https://aclanthology.org/2022.emnlp-demos.7} {{F}air{L}ib: A
  unified framework for assessing and improving fairness}.
\newblock In \emph{Proceedings of the The 2022 Conference on Empirical Methods
  in Natural Language Processing: System Demonstrations}, pages 60--71, Abu
  Dhabi, UAE. Association for Computational Linguistics.

\bibitem[{Hardt et~al.(2016)Hardt, Price, and Srebro}]{hardt2016equality}
Moritz Hardt, Eric Price, and Nati Srebro. 2016.
\newblock Equality of opportunity in supervised learning.
\newblock \emph{Advances in Neural Information Processing Systems},
  29:3315--3323.

\bibitem[{Hashimoto et~al.(2018)Hashimoto, Srivastava, Namkoong, and
  Liang}]{hashimoto2018fairness}
Tatsunori Hashimoto, Megha Srivastava, Hongseok Namkoong, and Percy Liang.
  2018.
\newblock Fairness without demographics in repeated loss minimization.
\newblock In \emph{International Conference on Machine Learning}, pages
  1929--1938. PMLR.

\bibitem[{Kearns et~al.(2018)Kearns, Neel, Roth, and Wu}]{kearns2018preventing}
Michael Kearns, Seth Neel, Aaron Roth, and Zhiwei~Steven Wu. 2018.
\newblock Preventing fairness gerrymandering: Auditing and learning for
  subgroup fairness.
\newblock In \emph{International Conference on Machine Learning}, pages
  2564--2572. PMLR.

\bibitem[{Kearns et~al.(2019)Kearns, Neel, Roth, and Wu}]{Kearns2019AnES}
Michael Kearns, Seth Neel, Aaron Roth, and Zhiwei~Steven Wu. 2019.
\newblock An empirical study of rich subgroup fairness for machine learning.
\newblock \emph{Proceedings of the Conference on Fairness, Accountability, and
  Transparency}.

\bibitem[{Lahoti et~al.(2020)Lahoti, Beutel, Chen, Lee, Prost, Thain, Wang, and
  Chi}]{NEURIPS2020_07fc15c9}
Preethi Lahoti, Alex Beutel, Jilin Chen, Kang Lee, Flavien Prost, Nithum Thain,
  Xuezhi Wang, and Ed~Chi. 2020.
\newblock \href
  {https://proceedings.neurips.cc/paper/2020/file/07fc15c9d169ee48573edd749d25945d-Paper.pdf}
  {Fairness without demographics through adversarially reweighted learning}.
\newblock In \emph{Advances in Neural Information Processing Systems},
  volume~33, pages 728--740.

\bibitem[{Li et~al.(2018)Li, Baldwin, and Cohn}]{li-etal-2018-towards}
Yitong Li, Timothy Baldwin, and Trevor Cohn. 2018.
\newblock \href {https://doi.org/10.18653/v1/P18-2005} {Towards robust and
  privacy-preserving text representations}.
\newblock In \emph{Proceedings of the 56th Annual Meeting of the Association
  for Computational Linguistics (Volume 2: Short Papers)}, pages 25--30.

\bibitem[{Lum et~al.(2022)Lum, Zhang, and Bower}]{lum2022biasing}
Kristian Lum, Yunfeng Zhang, and Amanda Bower. 2022.
\newblock \href {https://doi.org/10.1145/3531146.3533105} {De-biasing ``bias''
  measurement}.
\newblock In \emph{2022 {ACM} {Conference} on {Fairness}, {Accountability}, and
  {Transparency} ({FAccT} ’22)}, Seoul, Republic of Korea. ACM.

\bibitem[{Marler and Arora(2004)}]{marler2004survey}
R~Timothy Marler and Jasbir~S Arora. 2004.
\newblock Survey of multi-objective optimization methods for engineering.
\newblock \emph{Structural and Multidisciplinary Optimization}, 26(6):369--395.

\bibitem[{May et~al.(2019)May, Wang, Bordia, Bowman, and
  Rudinger}]{may-etal-2019-measuring}
Chandler May, Alex Wang, Shikha Bordia, Samuel~R. Bowman, and Rachel Rudinger.
  2019.
\newblock \href {https://doi.org/10.18653/v1/N19-1063} {On measuring social
  biases in sentence encoders}.
\newblock In \emph{Proceedings of the 2019 Conference of the North {A}merican
  Chapter of the Association for Computational Linguistics: Human Language
  Technologies, Volume 1 (Long and Short Papers)}, pages 622--628, Minneapolis,
  Minnesota. Association for Computational Linguistics.

\bibitem[{Ravfogel et~al.(2020)Ravfogel, Elazar, Gonen, Twiton, and
  Goldberg}]{ravfogel-etal-2020-null}
Shauli Ravfogel, Yanai Elazar, Hila Gonen, Michael Twiton, and Yoav Goldberg.
  2020.
\newblock \href {https://doi.org/10.18653/v1/2020.acl-main.647} {Null it out:
  Guarding protected attributes by iterative nullspace projection}.
\newblock In \emph{Proceedings of the 58th Annual Meeting of the Association
  for Computational Linguistics}, pages 7237--7256.

\bibitem[{Rawls(2001)}]{rawls2001justice}
John Rawls. 2001.
\newblock \emph{Justice as Fairness: A Restatement}.
\newblock Harvard University Press.

\bibitem[{Roh et~al.(2021)Roh, Lee, Whang, and Suh}]{Roh2021FairBatch}
Yuji Roh, Kangwook Lee, Steven~Euijong Whang, and Changho Suh. 2021.
\newblock Fairbatch: {B}atch selection for model fairness.
\newblock In \emph{Proceedings of the 9th International Conference on Learning
  Representations}.

\bibitem[{Romanov et~al.(2019)Romanov, De-Arteaga, Wallach, Chayes, Borgs,
  Chouldechova, Geyik, Kenthapadi, Rumshisky, and Kalai}]{romanov2019s}
Alexey Romanov, Maria De-Arteaga, Hanna Wallach, Jennifer Chayes, Christian
  Borgs, Alexandra Chouldechova, Sahin Geyik, Krishnaram Kenthapadi, Anna
  Rumshisky, and Adam Kalai. 2019.
\newblock What’s in a name? reducing bias in bios without access to protected
  attributes.
\newblock In \emph{Proceedings of the 2019 Conference of the North American
  Chapter of the Association for Computational Linguistics: Human Language
  Technologies, Volume 1 (Long and Short Papers)}, pages 4187--4195.

\bibitem[{Salukvadze(1971)}]{salukvadze1971concerning}
M~Ye Salukvadze. 1971.
\newblock Concerning optimization of vector functionals. {I}. {Programming} of
  optimal trajectories.
\newblock \emph{Avtomat. i Telemekh}, 8:5--15.

\bibitem[{Shen et~al.(2022{\natexlab{a}})Shen, Han, Cohn, Baldwin, and
  Frermann}]{shen2022does}
Aili Shen, Xudong Han, Trevor Cohn, Timothy Baldwin, and Lea Frermann.
  2022{\natexlab{a}}.
\newblock Does representational fairness imply empirical fairness?
\newblock In \emph{Findings of the Association for Computational Linguistics:
  AACL-IJCNLP 2022}, pages 81--95.

\bibitem[{Shen et~al.(2022{\natexlab{b}})Shen, Han, Cohn, Baldwin, and
  Frermann}]{shen2022optimising}
Aili Shen, Xudong Han, Trevor Cohn, Timothy Baldwin, and Lea Frermann.
  2022{\natexlab{b}}.
\newblock \href {https://doi.org/10.18653/v1/2022.naacl-main.299} {Optimising
  equal opportunity fairness in model training}.
\newblock In \emph{Proceedings of the 2022 Conference of the North American
  Chapter of the Association for Computational Linguistics: Human Language
  Technologies}, pages 4073--4084, Seattle, United States. Association for
  Computational Linguistics.

\bibitem[{Subramanian et~al.(2021)Subramanian, Han, Baldwin, Cohn, and
  Frermann}]{subramanian-etal-2021-evaluating}
Shivashankar Subramanian, Xudong Han, Timothy Baldwin, Trevor Cohn, and Lea
  Frermann. 2021.
\newblock \href {https://aclanthology.org/2021.emnlp-main.193} {Evaluating
  debiasing techniques for intersectional biases}.
\newblock In \emph{Proceedings of the 2021 Conference on Empirical Methods in
  Natural Language Processing}, pages 2492--2498, Online and Punta Cana,
  Dominican Republic. Association for Computational Linguistics.

\bibitem[{Wang et~al.(2019)Wang, Zhao, Yatskar, Chang, and
  Ordonez}]{wang2019balanced}
Tianlu Wang, Jieyu Zhao, Mark Yatskar, Kai-Wei Chang, and Vicente Ordonez.
  2019.
\newblock Balanced datasets are not enough: Estimating and mitigating gender
  bias in deep image representations.
\newblock In \emph{Proceedings of the IEEE International Conference on Computer
  Vision}, pages 5310--5319.

\bibitem[{Yang et~al.(2020)Yang, Cisse, and Koyejo}]{yang2020fairness}
Forest Yang, Mouhamadou Cisse, and Sanmi Koyejo. 2020.
\newblock \href
  {https://proceedings.neurips.cc/paper/2020/file/29c0605a3bab4229e46723f89cf59d83-Paper.pdf}
  {Fairness with overlapping groups; a probabilistic perspective}.
\newblock In \emph{Advances in Neural Information Processing Systems},
  volume~33, pages 4067--4078. Curran Associates, Inc.

\bibitem[{Zafar et~al.(2017)Zafar, Valera, Rogriguez, and
  Gummadi}]{zafar2017fairness}
Muhammad~Bilal Zafar, Isabel Valera, Manuel~Gomez Rogriguez, and Krishna~P
  Gummadi. 2017.
\newblock Fairness constraints: Mechanisms for fair classification.
\newblock In \emph{Artificial Intelligence and Statistics}, pages 962--970.
  PMLR.

\bibitem[{Zhao et~al.(2019)Zhao, Coston, Adel, and
  Gordon}]{zhao2019conditional}
Han Zhao, Amanda Coston, Tameem Adel, and Geoffrey~J Gordon. 2019.
\newblock Conditional learning of fair representations.
\newblock In \emph{International Conference on Learning Representations}.

\end{thebibliography}
\bibliographystyle{acl_natbib}

\clearpage
\appendix

\section{Full Disaggregated Results}
\label{sec:introduction_full_results}

\begin{figure}[!t]
    \centering
    \includegraphics[width=\linewidth]{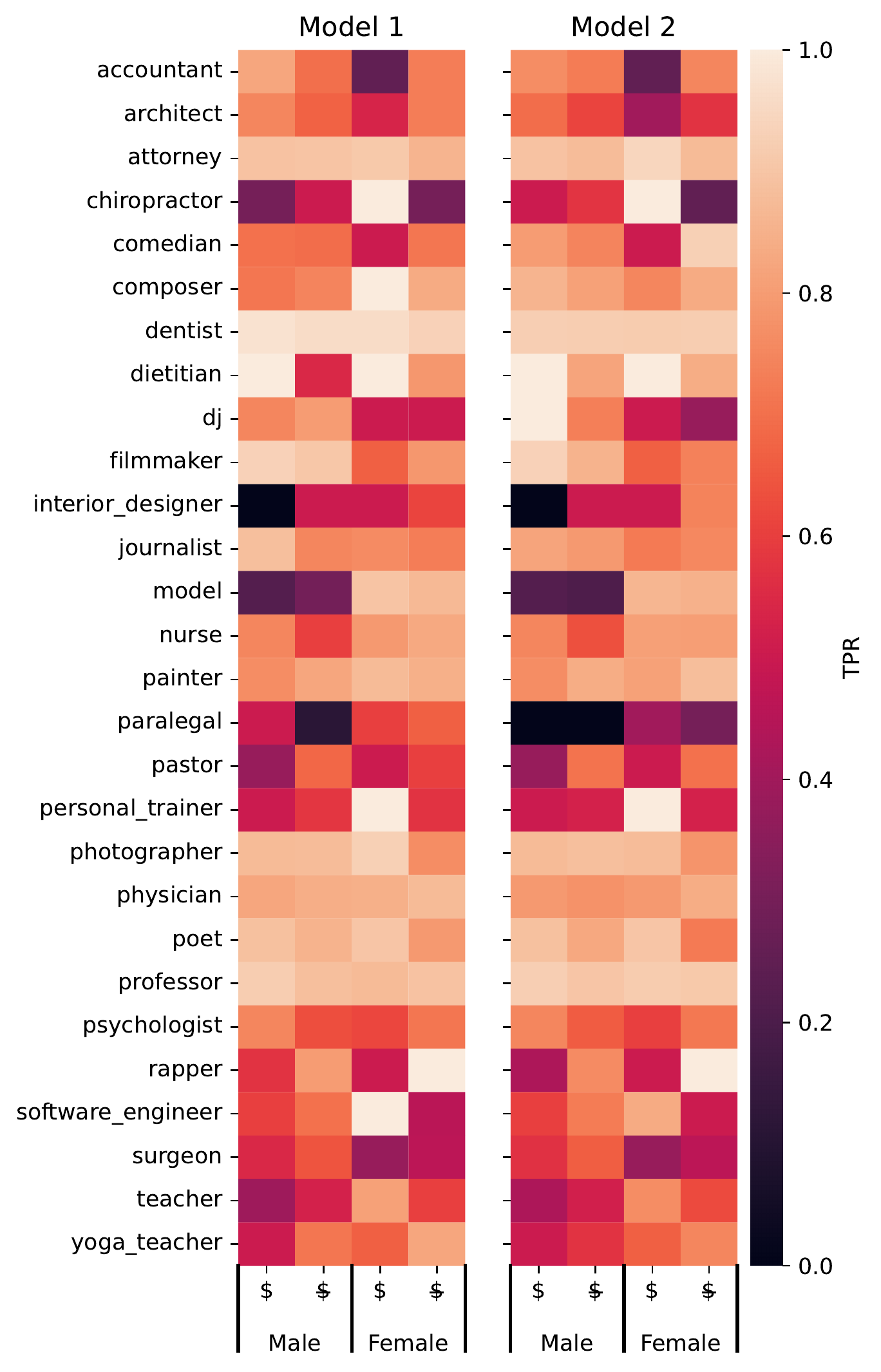}
    \caption{True positive rate (TPR) evaluation results over a biography classification dataset broken down by author demographics and profession classes. 
      \$ and \st{\$} denote the economic status (high vs.\ low, respectively). }
    \label{fig:annotated_tpr_row_metrics_v}
\end{figure}

Figure~\ref{fig:annotated_tpr_row_metrics_v} depicts the full TPR scores for two real-world models over a profession classification dataset, stratified across 4 protected attributes (\class{male} vs.\ \class{female} and \class{high} vs.\ \class{low} economic status) from \Cref{fig:tpr_row_metrics}. Specifically, the two models are both trained naively without debiasing. They share the same hyperparameter settings and random seed, except that Model 1 and Model 2 are the 9th and 5th epochs, respectively.
For professions such as \class{Professor}, there is little discernible difference either between the two models or across different combinations of protected attributes.
For \class{DJ}, on the other hand, Model 1 appears to be reasonably fair w.r.t.\ economic status but biased for binary gender, whereas Model 2 is biased across both protected attributes but appears to have the higher overall TPR.
Finally, with \class{Paralegal}, Model 2 appears to be fairer w.r.t.\ both economic status and binary gender but perform substantially worse than the more biased Model 1 in terms of the individual TPR scores for every combination of protected attributes.
So it is hard to tell which model is fairer or ``better'' out of the two, without aggregation.

\begin{table}[t!]
\centering
\small
\sisetup{
round-mode = places,
round-precision = 0,
}%
\renewrobustcmd{\bfseries}{\fontseries{b}\selectfont}
\begin{tabular}{
@{} 
l
r
S[table-format=2.0] 
S[table-format=2.0]
S[table-format=2.0]
S[table-format=2.0]
@{} 
}
\toprule
\multirow{2}{*}{\bf Profession} & \multirow{2}{*}{\bf  Total} & \multicolumn{2}{c}{\bf Male} & \multicolumn{2}{c}{\bf Female} \\
  \cmidrule{3-6}
&       & {\bf  \$} & {\bf \st{\$}}   & {\bf \$} & {\bf  \st{\$}} \\
\midrule
        professor &  21715 &    46.1708 &   09.1964 &   37.3613 &   07.2715 \\
        physician &   7581 &    42.4482 &   08.4422 &   41.1159 &   07.9937 \\
         attorney &   6011 &    51.1728 &   09.9152 &   32.6901 &   06.2219 \\
     photographer &   4398 &    53.0923 &   11.0505 &   30.2638 &   05.5935 \\
       journalist &   3676 &    40.7236 &   09.3308 &   41.3765 &   08.5691 \\
            nurse &   3510 &    07.5499 &   01.1396 &   76.3818 &   14.9288 \\
     psychologist &   3280 &    30.7012 &   06.4939 &   52.2561 &   10.5488 \\
          teacher &   2946 &    35.1324 &   06.1439 &   49.2193 &   09.5044 \\
          dentist &   2682 &    52.0880 &   11.3348 &   30.2759 &   06.3013 \\
          surgeon &   2465 &    72.6572 &   12.3732 &   12.5761 &   02.3935 \\
        architect &   1891 &    64.1460 &   11.6341 &   20.7827 &   03.4373 \\
          painter &   1408 &    47.3011 &   08.8778 &   36.2926 &   07.5284 \\
            model &   1362 &    14.9046 &   02.4963 &   69.6035 &   12.9956 \\
             poet &   1295 &    45.9459 &   07.3359 &   38.5328 &   08.1853 \\
software engineer &   1289 &    69.7440 &   13.7316 &   14.0419 &   02.4825 \\
        filmmaker &   1225 &    55.5918 &   09.6327 &   28.8980 &   05.8776 \\
         composer &   1045 &    70.4306 &   14.1627 &   13.6842 &   01.7225 \\
       accountant &   1012 &    55.3360 &   09.4862 &   28.8538 &   06.3241 \\
        dietitian &    730 &    05.0685 &   01.2329 &   81.6438 &   12.0548 \\
         comedian &    499 &    69.3387 &   09.0180 &   18.6373 &   03.0060 \\
     chiropractor &    474 &    61.8143 &   14.3460 &   20.6751 &   03.1646 \\
           pastor &    453 &    59.3819 &   14.5695 &   22.5166 &   03.5320 \\
        paralegal &    330 &    12.4242 &   02.7273 &   70.0000 &   14.8485 \\
     yoga teacher &    305 &    13.4426 &   02.9508 &   71.4754 &   12.1311 \\
interior designer &    267 &    16.4794 &   04.1199 &   67.0412 &   12.3596 \\
 personal trainer &    264 &    41.2879 &   09.8485 &   42.0455 &   06.8182 \\
               DJ &    244 &    70.9016 &   15.5738 &   11.0656 &   02.4590 \\
           rapper &    221 &    74.6606 &   15.3846 &   09.0498 &   00.9050 \\
\midrule
\textbf{Total} & 72578 & 08.863567 & 45.079776 & 07.463694 & 38.592962\\
\bottomrule
\end{tabular} 
\caption{Training set distribution of the \Bios dataset. For each profession, the table shows the number of individuals and the breakdown across demographics as a percentage. \$ and \st{\$} denote the economic status (high vs.\ low, respectively). 
}
\label{table:bios_distribution}
\end{table}

\subsection{Dataset: \Bios}
All experiments are based on a biography classification dataset~\citep{de2019bias, ravfogel-etal-2020-null}, where biographies were scraped from the web, and annotated for the protected attribute of binary gender and target label of 28 profession classes.

Besides the binary gender attribute, we additionally consider economic status as a second protected attribute. \citet{subramanian-etal-2021-evaluating} semi-automatically labeled economic status based on the individual's home country (wealthy vs.\ rest of world), as geotagged from the first sentence of the biography.
For bias evaluation and mitigation, we consider the intersectional groups, i.e., the Cartesian product of the two protected attributes, leading to 4 intersectional classes: female--wealthy, female--rest, male--wealthy, and male--rest.

Since the data is not directly available, in order to construct the dataset, we use the scraping scripts of \citet{ravfogel-etal-2020-null}, leading to a dataset with 396k biographies.\footnote{There are slight discrepancies in the dataset composition due to data attrition: the original dataset~\citep{de2019bias} had 399k instances, while 393k were collected by~\citet{ravfogel-etal-2020-null}.}
Following~\citet{ravfogel-etal-2020-null}, we randomly split the dataset into train (65\%), dev (10\%), and test (25\%).

The augmentation for economic attributes follows previous work~\citep{subramanian-etal-2021-evaluating}, which results in approximately $30\%$ of instances that are labelled with both protected attributes.

\begin{figure*}[t]
    \centering
    \includegraphics[width=\linewidth]{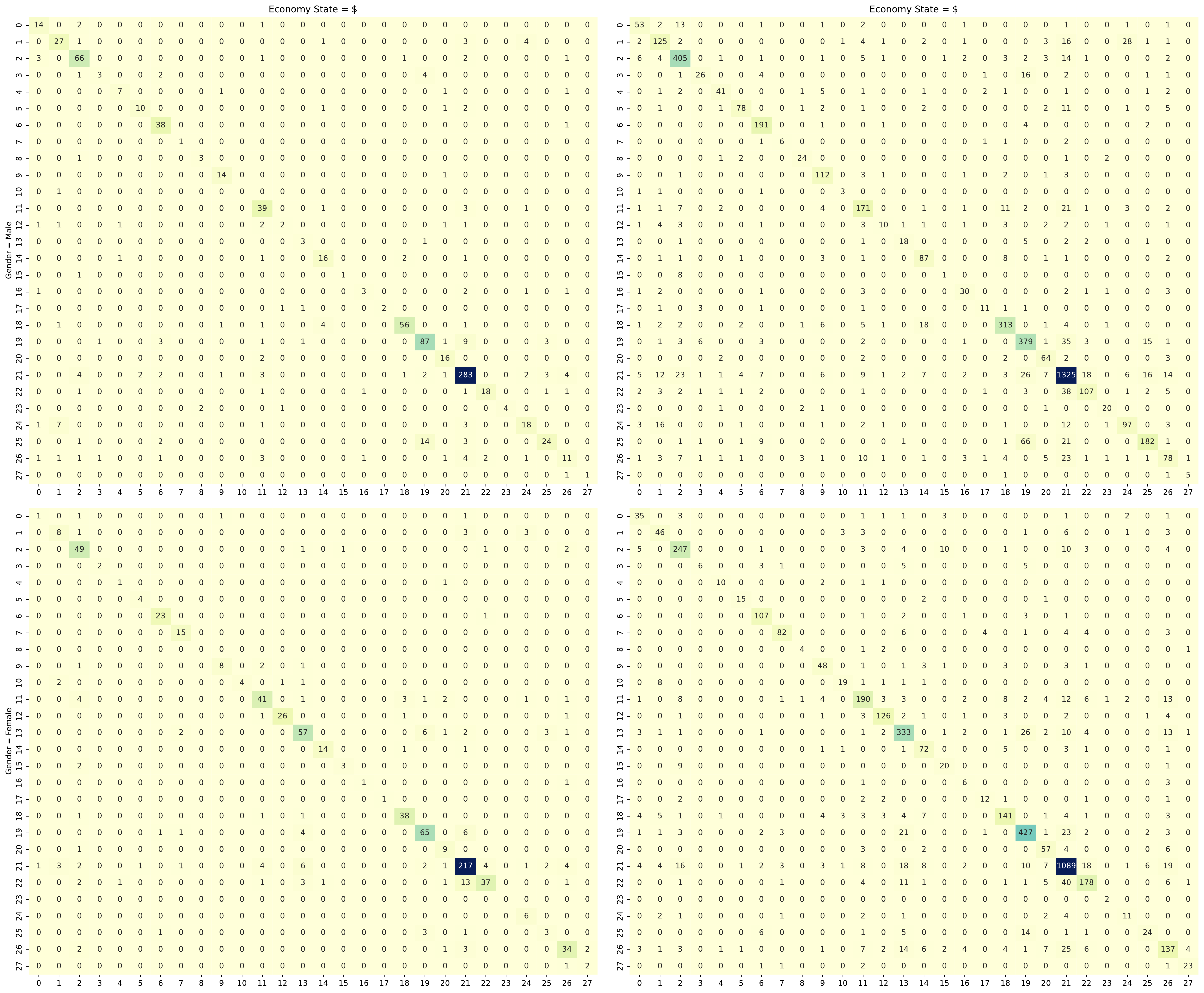}
    \caption{Confusion matrices of subgroups. Following Figure~\ref{fig:annotated_tpr_row_metrics_v}, confusion matrices are based on the predictions of Model 1, and class labels 0 to 27 are in the same order as the 28 professions in Figure~\ref{fig:annotated_tpr_row_metrics_v}.}
    \label{fig:subset_confusion_matrics}
\end{figure*}

\Cref{table:bios_distribution} shows the target label distribution and protected attribute distribution.

\subsection{Experimental Details}
This work focuses on evaluation and model comparison in the fairness literature. 
Instead of training models from scratch, we use existing checkpoints from previous work~\citep{han2022fairlib}, which are publicly available online.\footnote{Bios\_both at \url{https://github.com/HanXudong/fairlib/tree/main/analysis/results}}
Please refer to the original work~\citep{han2022fairlib} for experimental details.

\subsection{Subset Confusion Matrices}
\label{sec:Subset_Confusion_Matrices}

Figure~\ref{fig:subset_confusion_matrics} presents the confusion matrices of all 4 subgroups.
For each confusion matrix, the $i$-th row and $j$-th column entry indicates the number of samples which have the true label of the $i$-th class and predicted label of the $j$-th class.
Since the distributions of classes within each group can be highly imbalanced, without further normalization and aggregation, it is difficult to draw any conclusion by just observing the number of samples in each cell.

\subsection{Fairness Reproducibility}
So far, we have listed critical factors underlying the choice of fairness metric, and provided recommendations for metric selection.
However, we acknowledge that, in actual applications, the selection should be made in a domain-specific manner in close consultation with stakeholders or policymakers.
In practice, countless types of fairness evaluation metrics could be derived from different combinations of aggregation methods. 

Instead of reporting all possible fairness metrics, we suggest providing a set of confusion matrices for classification tasks, as it can form the basis of calculating a large number of metrics, including PPR, TPR, TNR, accuracy, and F-measure.
The other key advantage of reporting confusion matrices is that the number of reported values is generally much smaller than the model or dataset size.
Given a $\ermC$-class classification dataset with $\ermG$ distinct protected groups, the combined size  of the confusion matrices is  $\ermG \times \ermC^{2}$ (one confusion matrix per group). 
Taking the \Bios dataset as an example, the sizes of the confusion matrices, test dataset, and model parameters (for a BERT-base classifier~\citep{devlin2019bert}) are approximately $3 \times 10^{3}$, $4 \times 10^{4}$, and $1 \times 10^{8}$, respectively.

\begin{table*}[!ht]
\renewrobustcmd{\bfseries}{\fontseries{b}\selectfont}
\centering
\small
\begin{adjustbox}{max width=\linewidth}
\sisetup{
round-mode = places,
}%
\begin{tabular}{
ll
S[table-format=3.1, round-precision = 1]@{\,\( \pm \)\,}S[table-format=2.1, round-precision = 1,table-number-alignment = left]
S[table-format=3.1, round-precision = 1]@{\,\( \pm \)\,}S[table-format=2.1, round-precision = 1,table-number-alignment = left]
S[table-format=2.1, round-precision = 1]
S[table-format=3.1, round-precision = 1]@{\,\( \pm \)\,}S[table-format=2.1, round-precision = 1,table-number-alignment = left]
S[table-format=3.1, round-precision = 1]@{\,\( \pm \)\,}S[table-format=2.1, round-precision = 1,table-number-alignment = left]
S[table-format=2.1, round-precision = 1]
}

\toprule
  &  & \multicolumn{5}{c}{Test Set} & \multicolumn{5}{c}{Development Set} \\
\cmidrule(lr){3-7}\cmidrule(lr){8-12}

\bf Selection & \bf Method  & \multicolumn{2}{c}{\bf Performance} & \multicolumn{2}{c}{\bf Fairness} & \multicolumn{1}{c}{\bf DTO} & \multicolumn{2}{c}{\bf Performance }  & \multicolumn{2}{c}{\bf Fairness} & \multicolumn{1}{c}{\bf DTO} \\ 
\midrule
\multirow{4}{*}{\bf \DTO} &  \INLP &              81.354350 &              0.000000 &           62.442580 &           0.000000 & 41.931134    &     80.339540 &     0.000000 &      54.371000 &     0.000000 &   49.684398 \\
                          &   \Adv &              64.601630 &              4.508314 &           83.683840 &           1.123115 & 38.977707    &     63.862724 &     4.375978 &      79.931703 &     3.020925 &   41.335690 \\
                          &  \DAdv &              68.089476 &              5.485370 &           79.471282 &           6.769568 & 37.943508    &     67.579409 &     5.458177 &      75.463176 &     6.670896 &   40.658953 \\
                          &  \AAdv &              69.734659 &              4.898247 &           78.820603 &           7.722887 & 36.939920    &     69.295363 &     4.743128 &      75.200484 &     5.438290 &   39.468857 \\
\midrule
\multirow{4}{*}{\bf P} & \INLP &              81.354350 &              0.000000 &           62.442580 &           0.000000 & 41.931134    &   80.339540 &             0.000000 &          54.371000 &          0.000000 & 49.684398 \\
                                 & \Adv  &              81.470683 &              0.209556 &           59.474805 &           1.739385 & 44.560375    &   80.730194 &             0.220989 &          55.199163 &          1.304193 & 48.769257 \\
                                 & \DAdv &              81.436214 &              0.271583 &           59.287611 &           1.705075 & 44.744975    &   80.627966 &             0.251841 &          54.852240 &          1.709069 & 49.128362 \\
                                 & \AAdv &              81.256687 &              0.268492 &           58.635181 &           1.968618 & 45.413214    &   80.556773 &             0.287144 &          54.406716 &          1.578163 & 49.565983 \\
\midrule
\multirow{4}{*}{\bf P@F$+$5\%}  & \INLP &              53.753905 &              0.000000 &           74.979811 &           0.000000 &  52.580521 &  53.833516 &             0.000000 &          74.423048 &          0.000000 & 52.778071 \\
                                    & \Adv &               71.531363 &              5.669668 &           67.315544 &           5.354399 &  43.344399 &  71.254107 &             5.824511 &          66.183372 &          4.202679 & 44.383451 \\
                                    & \DAdv &              73.435065 &              3.763752 &           68.755527 &           4.905491 &  41.011131 &  72.964586 &             3.681593 &          65.952841 &          3.685704 & 43.475541 \\
                                    & \AAdv &              70.357258 &              5.722893 &           73.948014 &           8.864983 &  39.463883 &  69.985396 &             5.786656 &          67.419084 &          6.593549 & 44.298900 \\
\midrule
\multirow{4}{*}{\bf P@F$+$10\%}  & \INLP &              53.753905 &              0.000000 &           74.979811 &           0.000000 &   52.580521 &    53.833516 &             0.000000 &          74.423048 &          0.000000 & 52.778071 \\
                                     & \Adv  &              68.815482 &              4.795215 &           72.103062 &           5.577901 &   41.841526 &    68.152610 &             4.833248 &          70.271687 &          6.521562 & 43.566373 \\
                                     & \DAdv &              69.660694 &              2.900897 &           73.240677 &           6.832685 &   40.454108 &    69.207740 &             2.725972 &          70.604944 &          3.952069 & 42.570325 \\
                                     & \AAdv &              70.006104 &              3.340678 &           75.032929 &           3.874593 &   39.025484 &    69.448704 &             3.357603 &          70.820000 &          2.139696 & 42.247533 \\
 \midrule
\multirow{4}{*}{\bf F}  & \INLP &              29.812215 &              0.000000 &           99.999689 &           0.000000 &    70.187785 & 29.864914 &             0.000000 &          86.636821 &          0.000000 & 71.396812 \\
                                & \Adv &              51.601020 &             16.540178 &           90.219145 &           9.262941 &    49.377388 & 51.219423 &            16.237612 &          80.958107 &          6.000118 & 52.365431 \\
                               & \DAdv &              61.776597 &              3.693665 &           88.646096 &           3.685025 &    39.874048 & 61.151880 &             3.520992 &          82.157103 &          2.510622 & 42.749800 \\
                               & \AAdv &              37.946932 &              9.084083 &           99.038855 &           1.181159 &    62.060511 & 37.597663 &             8.771552 &          86.482231 &          0.174662 & 63.849680 \\
\midrule
\multirow{4}{*}{\bf F@P$-$5\%}  &  \INLP &              81.354350 &              0.000000 &           62.442580 &           0.000000 &  41.931134 & 80.339540 &             0.000000 &          54.371000 &          0.000000 & 49.684398 \\
                                    &   \Adv &              79.061434 &              1.052957 &           64.488537 &           1.304952 &  41.224841 & 78.504929 &             0.873399 &          58.732196 &          2.424519 & 46.530309 \\
                                    &  \DAdv &              80.443072 &              0.522515 &           64.657701 &           1.406390 &  40.392468 & 79.879518 &             0.545801 &          57.437202 &          1.137515 & 47.078929 \\
                                    &  \AAdv &              79.908082 &              2.177868 &           61.115353 &           2.754305 &  43.768721 & 79.136546 &             2.201642 &          58.067513 &          2.817180 & 46.836067 \\
\midrule
\multirow{4}{*}{\bf F@P$-$10\%} &  \INLP &              81.354350 &              0.000000 &           62.442580 &           0.000000 & 41.931134 & 80.339540 &             0.000000 &          54.371000 &          0.000000 & 49.684398 \\
                                    &   \Adv &              79.061434 &              1.052957 &           64.488537 &           1.304952 & 41.224841 & 78.504929 &             0.873399 &          58.732196 &          2.424519 & 46.530309 \\
                                    &  \DAdv &              74.207748 &              3.152859 &           66.965168 &           2.367657 & 41.911101 & 73.705732 &             3.284853 &          64.236675 &          1.143709 & 44.389232 \\
                                    &  \AAdv &              74.859790 &              5.121396 &           65.366052 &           6.868328 & 42.796501 & 74.267981 &             5.307684 &          64.246350 &          5.409667 & 44.050656 \\
\bottomrule
\end{tabular} 
\end{adjustbox}
\caption{Evaluation results $\pm$ standard deviation ($\%$) of selected models over the \Bios dataset. \DTO scores are the distance from mean performance and fairness to (1,1) over the test set. 
}
\label{table:full_model_selection_results}
\end{table*}

\section{Full Results of Case Studies}
\label{sec:full_comparison}

Table~\ref{table:full_model_selection_results} shows the experimental results for both the test and development sets.

\section{AUC-PFC Extension}
\label{sec:AUC_extension}

\subsection{Weighted DTO}
On the one hand, as suggested by~\citet{marler2004survey}, if fairness and performance have different scales, the Euclidean distance is not a suitable mathematical representation of closeness, resulting in worse approximation of Pareto optimality and efficiency. Therefore, the scales of performance and fairness should be normalized.

\subsection{Selection of Utopia Point}

\begin{figure}[t]
    \centering
    \includegraphics[width=\linewidth]{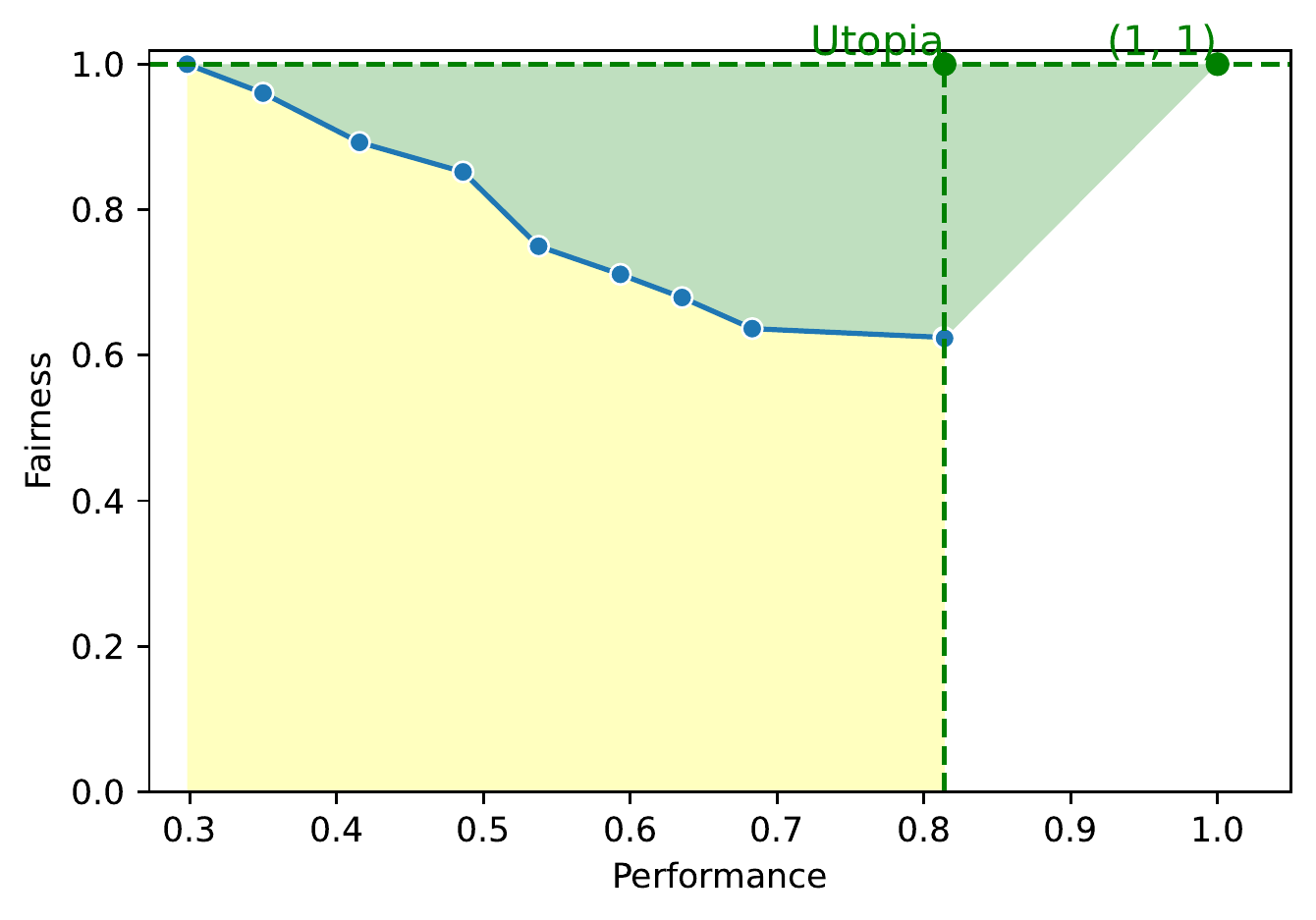}
    \caption{Integration of DTO with respect to an arbitrary ideal point, which is (1, 1) in this example.}
    \label{fig:Bios_both_INLP_AUC_extra}
\end{figure}

Typically, most debiasing methods will share the same maximum performance, which is the performance of the vanilla model (corresponding to a hyperparameter setting where the debiasing method does nothing.)
Accordingly this is a sensible choice for the performance of the Utopia point, as we have proposed for model selection.
In terms of the calculation of areas of integration, moving the Utopia point to $(1,1)$ has little effect, simply adding a constant triangular region which is identical for all methods, and thus irrelevant for model comparison.
As such, it makes no difference whether we use 1 or the maximum-achieved model performance when comparing models based on AUC-PFC.

\begin{figure}[t]
    \centering
    \includegraphics[width=\linewidth]{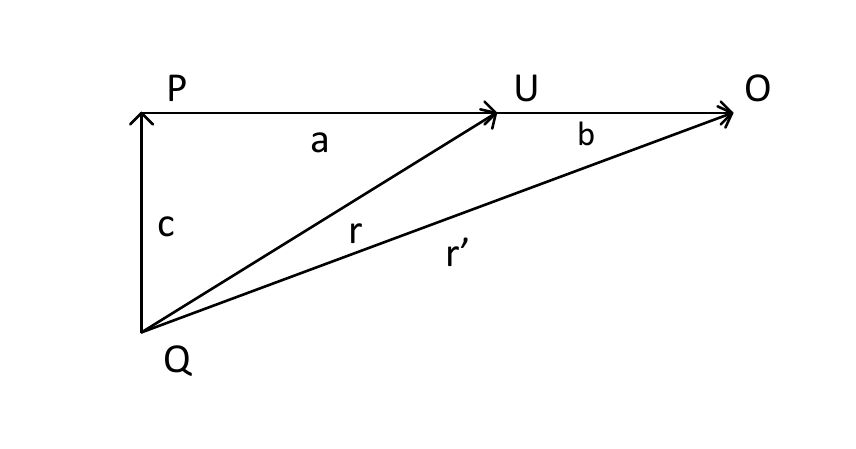}
    \caption{The influence of using different optimal points. Uppercase ($Q$, $P$, $U$, and $O$) and lowercase ($c$, $a$, $b$, $r$, and $r'$) characters represent points and Euclidean distances between points.}
    \label{fig:dto_ideal_point}
\end{figure}

\paragraph{Distance to Arbitrary Ideal Point:}
Compared to the default value of \DTO, moving the utopia point to the right (e.g., the $(1, 1)$ point) prioritizes methods with higher performance.

As shown in Figure~\ref{fig:dto_ideal_point}, without loss of generality, let 
\begin{itemize}
    \item $Q = (0,0)$ denote the candidate point;
    \item $U = (c,a)$ denote the Utopia point, where $c$ is the fairness distance from $Q$ to the maximum fairness (which is $1$), and $a$ is the performance distance from $Q$ to the maximum performance (which is $0.82$ is Figure~\ref{fig:dto_ideal_point}); and
    \item $O = (c,a+b)$ denote the arbitrary model, where $b > 0$, e.g., $b= (1-0.82)$ for the running example. 
\end{itemize}

Before discussing the influence of the optimum point selection, recall that the magnitude of vector sum, $|\vec{v}| = |\vec{v}_{1}+\vec{v}_{2}|$ is:
$$|\vec{v}| = \sqrt{ |\vec{v}_{1}|^{2} + |\vec{v}_{2}|^{2} + 2|\vec{v}_{1}||\vec{v}_{2}|\cos{\alpha} },$$
where $\alpha$ is the angle between $\vec{v}_{1}$ and $\vec{v}_{2}$.

Let $QU$ denote the vector from candidate model $Q$ to the Utopia point $U$, the DTO based on the Utopia point is the $r =\sqrt{a^2+c^2}$. 

When calculating \DTO based on the arbitrary optimum point $O$, 
$r' = |QU+UO|$, which can be shown as:
$$r' = \sqrt{r^{2} + b^2 + 2rb\cos{\alpha'}},$$
where $\alpha'$ is the angle between $QU$ and $UO$, and is equivalent to $\angle{PUQ}$. Furthermore, as discussed in \Cref{sec:AUC-PFC}, given a trade-off curve, the DTO is a function of $\angle{PUQ}$, i.e., the green shaded area is $\int_{0}^{\pi/2}\DTO(\angle{PUQ})d\angle{PUQ}$.

\begin{lemma}
Let $Q_{1}$ and $Q_{2}$ be two models with the same \DTO score ($r_{1}=r_{2}$), $r'_{1}$ and $r'_{2}$ be the \DTO to the new Utopia point $O$. If the performance of $Q_{1}$ is worse than $Q_{2}$, then $r'_{1} > r'_{2}$.

\end{lemma}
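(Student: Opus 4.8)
The plan is to turn the lemma into a one-line comparison by rewriting the identity preceding it, $r'^2 = r^2 + b^2 + 2rb\cos\alpha'$, so that the performance coordinate appears explicitly. First I would work in coordinates centered at the Utopia point $U$, with the first axis measuring fairness and the second measuring performance; then the new ideal point is $O = (0,b)$ with $b>0$ (it lies directly above $U$, consistent with Figure~\ref{fig:dto_ideal_point}), and I write $Q_i = (x_i,y_i)$ for $i\in\{1,2\}$. The hypothesis $r_1 = r_2$ becomes $x_1^2+y_1^2 = x_2^2+y_2^2 =: r^2$, and expanding the squared distance to $O$ gives
$$r_i'^2 \;=\; |Q_i-O|^2 \;=\; x_i^2+(y_i-b)^2 \;=\; r^2+b^2-2b\,y_i .$$
This is the same identity already stated in the text: since $\vec{Q_iU} = -(x_i,y_i)$ and $\vec{UO}=(0,b)$, one has $\cos\alpha_i' = -y_i/r$, so $2rb\cos\alpha_i' = -2b\,y_i$.

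Next I would translate the performance hypothesis. The performance of $Q_i$ equals the performance of $U$ plus $y_i$, so ``$Q_1$ has worse performance than $Q_2$'' is exactly $y_1 < y_2$. Since $b>0$, the displayed identity gives $r_1'^2 = r^2+b^2-2b\,y_1 > r^2+b^2-2b\,y_2 = r_2'^2$, and taking nonnegative square roots yields $r_1' > r_2'$. If one prefers to stay within the polar / law-of-cosines formulation of Section~\ref{sec:AUC-PFC}, the same conclusion is read off from monotonicity of $\cos$ on $[0,\pi]$: worse performance makes $\vec{Q_iU}$ more nearly parallel to the upward vector $\vec{UO}$, so $\alpha_1' < \alpha_2'$, hence $\cos\alpha_1' > \cos\alpha_2'$ and $r_1' > r_2'$.

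There is no real obstacle here — the argument is a single algebraic identity followed by a sign comparison — so the only things that need care are the conventions: (i) that moving from $U$ to the arbitrary ideal point $O$ strictly increases the performance coordinate while leaving the fairness coordinate fixed (i.e.\ $b>0$), and (ii) that ``worse performance'' is interpreted as a strictly smaller performance coordinate. It is worth noting that the fairness coordinates $x_i$ enter only through $x_i^2+y_i^2=r^2$ and then cancel, so the statement actually holds for any two points at common distance $r$ from $U$ — equivalently, $r_i'$ is a strictly decreasing function of the performance of $Q_i$ along that circle — and the lemma is the special case comparing two such points.
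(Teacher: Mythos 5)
Your proof is correct and rests on the same identity the paper uses: expanding $|Q_iO|^2$ gives $r_i'^2 = r^2 + b^2 - 2b\,y_i$ (equivalently $r^2+b^2+2a_ib$ in the paper's notation, since $y_i=-a_i$), so with $r_1=r_2$ and $b>0$ the comparison of $r_1',r_2'$ reduces to the sign comparison on the performance coordinate. The paper writes the same reversible chain in the opposite direction (starting from ``Assuming that $r'_1>r'_2$'' and deducing $a_1>a_2$ via the vector-sum/law-of-cosines form), so your Cartesian version is essentially the same argument, just stated directly in the forward direction and making explicit that the fairness coordinates cancel.
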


\begin{proof}
Assuming that $r'_{1}>r'_{2}$, 
{ \small
\begin{align}
\begin{split}
    |Q_{1}U+UO| & > |Q_{2}U+UO| \\
    |Q_{1}U+UO|^2 & > |Q_{2}U+UO|^2 \\
    2r_{1}b\cos{\angle{PUQ_{1}}} & > 2r_{2}b\cos{\angle{PUQ_{2}}} 
\end{split}
\end{align}
}
Since $\angle{PUQ} \leq \pi/2, \forall{Q}$, and $r_1 = r_2$, 
$$a_1 = r_{1}\cos{\angle{PUQ_{1}}} > a_2 = r_{2}\cos{\angle{PUQ_{2}}},$$
where $a_1$ and $a_2$ are the performance distances from $Q_1$ and $Q_2$ to the maximum performance, respectively. 

\end{proof}

\end{document}